\documentclass[10pt,twocolumn,letterpaper]{article}

\usepackage{cvpr}              
\usepackage[accsupp]{axessibility} 

\usepackage{amsthm}
\usepackage{thm-restate}

\theoremstyle{plain}
\newtheorem{theorem}{Theorem}[section]

\theoremstyle{definition}

\theoremstyle{remark}
\newtheorem{remark}[theorem]{Remark}

\usepackage{colortbl}
\usepackage{array}
\usepackage[table]{xcolor} 

\usepackage{bbding}
\newcommand{\yes}{\Checkmark}
\newcommand{\no}{\XSolidBrush}

\renewcommand{\paragraph}[1]{\vspace{.5em}\noindent\textbf{#1\ \ }}

\usepackage{multirow}

\usepackage{amsmath,amsfonts,bm}

\def\eqref#1{equation~\ref{#1}}

\def\1{\bm{1}}

\def\vzero{{\bm{0}}}
\def\vone{{\bm{1}}}

\def\vb{{\bm{b}}}

\def\ve{{\bm{e}}}

\def\vg{{\bm{g}}}

\def\vl{{\bm{l}}}

\def\vp{{\bm{p}}}

\def\vt{{\bm{t}}}

\def\vv{{\bm{v}}}
\def\vw{{\bm{w}}}
\def\vx{{\bm{x}}}

\def\vz{{\bm{z}}}

\def\mM{{\bm{M}}}

\def\mT{{\bm{T}}}

\DeclareMathAlphabet{\mathsfit}{\encodingdefault}{\sfdefault}{m}{sl}
\SetMathAlphabet{\mathsfit}{bold}{\encodingdefault}{\sfdefault}{bx}{n}

\def\sM{{\mathbb{M}}}

\def\sS{{\mathbb{S}}}

\newcommand{\E}{\mathbb{E}}

\newcommand{\R}{\mathbb{R}}

\newcommand{\softmax}{\mathrm{softmax}}

\DeclareMathOperator*{\argmax}{arg\,max}

\DeclareMathOperator{\diag}{diag}

\newcommand{\algname}{\textit{Ramen}}

\let\ab\allowbreak

\ab

\newcommand{\normalize}{\mathrm{normalize}}

\newcommand{\meansd}[2]{#1 \scriptsize(#2)}

\definecolor{cvprblue}{rgb}{0.21,0.49,0.74}
\usepackage[pagebackref,breaklinks,colorlinks,allcolors=cvprblue]{hyperref}

\def\paperID{42152} 
\def\confName{CVPR}
\def\confYear{2026}

\title{{\algname}: Robust Test-Time Adaptation of Vision-Language Models with \\ Active Sample Selection}

\author{Wenxuan Bao\thanks{Equal contribution.} \quad Yanjun Zhao\footnotemark[1] \quad Xiyuan Yang \quad  Jingrui He\thanks{Corresponding author.}\\
University of Illinois Urbana-Champaign\\
{\tt\small \{wbao4,yanjunzh,xiyuany4,jingrui\}@illinois.edu}
}

\begin{document}
\maketitle

\addtocontents{toc}{\protect\setcounter{tocdepth}{0}} 

\begin{abstract}
    Pretrained vision-language models such as CLIP exhibit strong zero-shot generalization but remain sensitive to distribution shifts. Test-time adaptation adapts models during inference without access to source data or target labels, offering a practical way to handle such shifts. However, existing methods typically assume that test samples come from a single, consistent domain, while in practice, test data often include samples from mixed domains with distinct characteristics. Consequently, their performance degrades under mixed-domain settings. 
    To address this, we present {\algname}, a framework for robust test-time adaptation through active sample selection. For each incoming test sample, {\algname} retrieves a customized batch of relevant samples from previously seen data based on two criteria: domain consistency, which ensures that adaptation focuses on data from similar domains, and prediction balance, which mitigates adaptation bias caused by skewed predictions.
    To improve efficiency, {\algname} employs an embedding-gradient cache that stores the embeddings and sample-level gradients of past test images. The stored embeddings are used to retrieve relevant samples, and the corresponding gradients are aggregated for model updates, eliminating the need for any additional forward or backward passes. 
    Our theoretical analysis provides insight into why the proposed adaptation mechanism is effective under mixed-domain shifts. 
    Experiments on multiple image corruption and domain-shift benchmarks demonstrate that {\algname} achieves strong and consistent performance, offering robust and efficient adaptation in complex mixed-domain scenarios. 
    Our code is available at \texttt{\url{https://github.com/baowenxuan/Ramen}}. 
\end{abstract}
    
\section{Introduction}
\label{sec:introduction}

Pretrained vision-language models (VLMs) such as CLIP~\cite{clip} have demonstrated strong zero-shot generalization across a wide range of vision tasks~\cite{regionclip,denseclip,styleclip}. However, their performance can degrade significantly under distribution shifts, including image corruptions~\cite{corruption} and domain shifts~\cite{domainbed}. Test-time adaptation (TTA) has emerged as a promising strategy for improving model robustness under distribution shifts, by adapting the model during test-time without accessing source data or target labels~\cite{tta_survey1,tta_survey2,tta_survey3}. This property makes TTA particularly suitable for the adaptation of pretrained VLMs, where the source training data is often large-scale, proprietary, or unavailable at deployment. 

Although existing TTA methods have demonstrated impressive performance on standard benchmarks, most of them rely on a key assumption that testing samples are drawn from a single, consistent domain. In practice, however, this assumption is often violated, as test data may contain samples from multiple domains with distinct characteristics~\cite{rotta,sar,sar2}. For instance, a user’s photo gallery on a smartphone may include images captured under different weather conditions and lighting environments, or even collected from different platforms. In such \textbf{mixed-domain scenarios}, conventional TTA algorithms often exhibit degraded performance, as they struggle to generalize across diverse and inconsistent test distributions~\cite{eata,sar,sar2,roid,unmix_tns,vte}.

We argue that this degradation arises because a single model is forced to adapt simultaneously to data from multiple, distinct domains. As a result, the model cannot specialize for each domain but instead adapts to an averaged domain representation, leading to suboptimal adaptation.
To address this issue, we propose {\algname}, which enables \textbf{R}obust test-time \textbf{a}daptation with sa\textbf{m}ple s\textbf{e}lectio\textbf{n}. Instead of passively adapting on the entire mixed test stream, which dilutes domain-specific signals, {\algname} actively constructs a \textbf{customized support set}, a small batch of previously seen test samples most relevant to the current one, for model adaptation. 
The selected samples are determined by two criteria: (1) \textbf{domain consistency}, which ensures that adaptation focuses on data from similar domains measured by their distance in image embeddings, and (2) \textbf{prediction balance}, which mitigates adaptation bias caused by skewed model predictions. 
To further reduce the computational cost of per-sample adaptation, {\algname} employs an \textbf{embedding-gradient cache} that stores both the embeddings and sample-level gradients of past test images. The stored embeddings are used for sample retrieval, and the corresponding gradients are aggregated for model updates, eliminating the need for additional forward or backward passes. 
Our theoretical analysis provides insight into why {\algname} is effective under mixed-domain shifts, and extensive experiments across image corruption and domain-shift benchmarks demonstrate that {\algname} achieves robust and efficient adaptation in complex mixed-domain scenarios.
\textbf{We summarize our main contributions as follows:}
\begin{itemize}
    \item We introduce an \textit{active sample selection} framework to enable effective adaptation under mixed-domain shifts, guided by two empirical selection criteria: \textit{domain consistency} and \textit{prediction balance}.
    \item We propose an efficient algorithm, {\algname}, which implements active sample selection through an \textit{embedding-gradient cache} that reuses stored embeddings and gradients for lightweight adaptation.
    \item We provide theoretical analysis that reveals the underlying principles of a broad class of TTA methods and explains why {\algname} improves adaptation performance under mixed-domain settings.
    \item We conduct extensive experiments on image corruption and domain-shift benchmarks, demonstrating both the \textit{effectiveness} and \textit{efficiency} of {\algname}. 
\end{itemize}

\section{Related Works}
\label{sec:related_works}

In this section, we summarize related works of TTA in both single-domain and mixed-domain scenarios. We provide a broader discussion of related works in Appendix \ref{appendix:discussion:related_work}. 

\paragraph{Single-domain TTA}
TTA adapts a pre-trained model to an unlabeled target domain without access to source data. Single-domain TTA assumes that all target samples are drawn from one consistent domain. Among existing methods, one common approach optimizes a self-supervised objective, such as entropy minimization~\cite{tent,note,eata,deyo}, image–text alignment~\cite{clipartt,watt}, or inter-class variance~\cite{mint}, to adjust the model’s normalization layers in response to distribution shifts. Another category is memory-based, which stores embeddings of high-confidence samples and uses embedding similarity to refine model predictions~\cite{adanpc,dmn,tda}. Augmentation-based methods generate multiple augmented views for each image and either aggregate predictions across views~\cite{vte,zero} or minimize marginal entropy to enforce consistency~\cite{tpt,tps}, but these approaches often incur substantially higher computational costs. 

\paragraph{Mixed-domain TTA and more}
Mixed-domain TTA refers to the setting where a model must adapt to a target data stream containing samples from multiple domains. Prior works such as SAR~\cite{sar} and ROID~\cite{roid} revealed the performance drop of single-domain TTA methods in this setting and proposed techniques like sharpness-aware minimization and weight ensembling to mitigate model collapse. However, these approaches still rely on a single model adapting simultaneously to diverse domains, fundamentally limiting their effectiveness. 
UnMix-TNS~\cite{unmix_tns} addressed this issue by modifying BatchNorm \cite{batchnorm} to maintain multiple sets of running statistics, but this solution applied only to BatchNorm layers, which are generally discouraged under mixed-domain settings \cite{sar}. 
A related yet distinct scenario is \textit{continual TTA}~\cite{cotta,rotta}, where the model is sequentially adapted to a series of domains, one at a time. The key difference lies in the data continuity: in continual TTA, consecutive samples typically come from the same domain, whereas in mixed-domain TTA, even samples within a single batch may originate from different domains. 

\section{Proposed Method: {\algname}}
\label{sec:method}

\begin{figure*}
    \centering
    \includegraphics[width=0.85\linewidth]{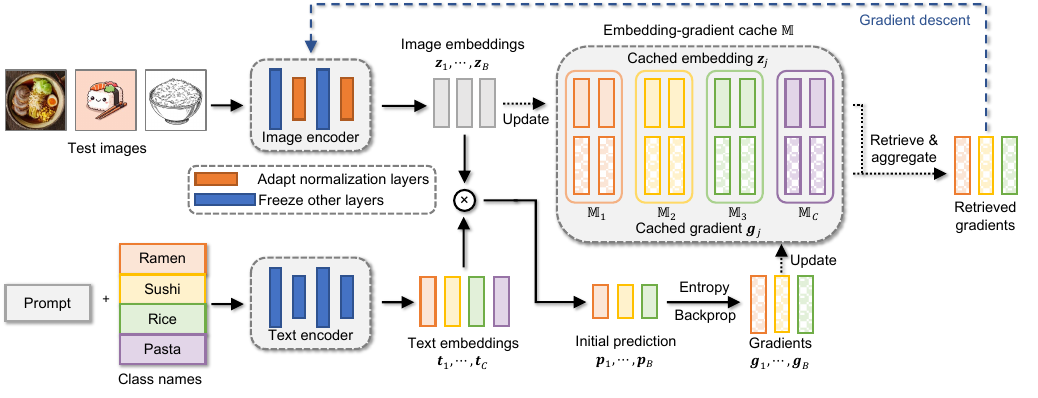}
    \caption{Overview of {\algname}. For each test sample, (1) compute its image embedding, pseudo-label, and sample-level gradient; (2) update the class-specific memory with these entries; (3) retrieve a support set from the memory; (4) aggregate the cached gradients for model update; and (5) perform inference and reset the model parameters.}
    \label{fig:overview}
\end{figure*}

In this section, we introduce our proposed method, {\algname}.
Subsection \ref{subsec:method:prelim} formally defines the research problem and the key challenges.
Subsection \ref{subsec:method:select} presents the active sample selection mechanism and its selection criteria.
Subsection \ref{subsec:method:eg_cache} explains how we leverage an embedding–gradient cache to improve the efficiency of active sample selection, and summarizes the overall algorithm.
Figure \ref{fig:overview} gives an overview of our proposed method. 


\subsection{Preliminary}
\label{subsec:method:prelim}

\textbf{CLIP}~\cite{clip} is a vision–language model composed of an image encoder and a text encoder, jointly trained to align visual and textual representations. Leveraging large-scale image–caption pairs, CLIP learns generalizable representations and demonstrates strong zero-shot recognition capability. For a classification task with $C$ categories, the text encoder converts each class description (e.g., ``\texttt{a photo of a \{class\}}'') into normalized text embeddings $\mT = [\vt_1, \cdots, \vt_C]^\top \in \R^{C \times d}$, where $d$ denotes the embedding dimension. Given a test image $\vx_i$, the image encoder outputs a normalized embedding $\vz_i \in \R^d$. The predicted probability vector is $\vp_i = \softmax(\exp(t) \cdot \vz_i \mT^\top)$, where $\exp(t)$ is a temperature parameter. The predicted label corresponds to the class with highest probability, i.e., $\hat{y}_i = \argmax_y p_{iy}$. Despite its strong generalization ability, CLIP’s performance drops substantially when facing distribution shifts~\cite{watt,clipartt,mint}, such as image corruptions~\cite{corruption} or domain shifts~\cite{domainbed}.

\paragraph{Entropy minimization}
Many existing TTA methods minimize entropy as a surrogate loss~\cite{tent,eata,sar,deyo}. Given CLIP’s predicted probability vector $\vp_i = [p_{i1}, \cdots, p_{iC}]^\top$, the entropy for sample $i$ is computed as $H(\vx_i) = - \sum_{c=1}^C p_{ic} \log p_{ic}$, which measures the model's uncertainty on the current sample. The model parameters are then updated by minimizing the average entropy over a batch or in an online (streaming) manner.
In practice, most methods only update the normalization layers rather than the entire model.
Such an update is highly parameter-efficient (e.g., for the ViT-B/16 visual encoder, fewer than 0.05\% of parameters are involved) and helps mitigate catastrophic forgetting during continuous adaptation. Our proposed method follows this general paradigm, as it also minimizes the entropy loss while adapting only the normalization layers.

\paragraph{Challenge of mixed-domain shift}
During the testing stage, the model performs on-the-fly adaptation and evaluation over a stream of data batches.
Conventional TTA methods typically assume that the test data are drawn from a single, consistent domain.
However, in real-world scenarios, the test data may come from a mixture of domains.
It has been observed that under such mixed-domain shifts, the effectiveness of TTA drops significantly \cite{eata,sar,sar2,roid,unmix_tns,vte}, which is also confirmed by our experiments (see Figure \ref{fig:single_vs_mixed}).
We argue that this degradation arises because a single model is forced to adapt simultaneously to data from multiple, distinct domains.
For example, in a single-domain setting, data from different domains are encountered separately, allowing sufficient adaptation to each domain.
In contrast, under mixed-domain settings, even samples within the same batch may originate from different domains, yet they are forced to share one common model with identical parameters.
As a result, the model cannot specialize for each domain. Instead, it adapts to an averaged representation across domains, leading to suboptimal adaptation.


\subsection{Active Sample Selection}
\label{subsec:method:select}

To address this issue, we assign each test sample $\vx_i$ a distinct model weight $\vw_i$. Each $\vw_i$ is obtained by the adapting the model on a sample-specific \textit{support set} $\sS_i$, i.e., a subset of previously seen data for adaptation. Formally, this process can be expressed as
\begin{align}
    \hat{y}_i = \mathrm{CLIP}(\vx_i; \vw_i), \quad \text{where } \vw_i = \mathrm{TTA}(\vw; \sS_i),
\end{align}
where $\vw$ denotes the weights of the pre-trained model, and $\mathrm{TTA}()$ represents the test-time adaptation process, which takes the model weights and a support set as input and outputs the adapted weights. $\mathrm{CLIP}()$ refers to the CLIP prediction process as described in Subsection \ref{subsec:method:prelim}. 

In this subsection, we introduce how to select the support set $\sS_i$ for each test sample $\vx_i$. We propose two selection criteria: domain consistency and prediction balance. 
\begin{itemize}
    \item \textbf{Domain consistency} requires that the samples in $\sS_i$ should come from the same or similar domain as $\vx_i$. Although the domain label of each sample and the total number of domains are unknown, pretrained VLMs are typically trained for general-purpose understanding, and their image embeddings implicitly contain domain-related information. Therefore, the more similar two image embeddings are, the more likely they originate from the same domain. We provide an empirical validation in Appendix \ref{appendix:discussion:empirical_validation}.
    \item \textbf{Prediction balance} ensures that the samples in $\sS_i$ have balanced predictions across classes. If $\sS_i$ is dominated by samples predicted as a single class, the adaptation process may introduce a prediction bias, making the model more likely to assign future samples to that class \cite{sar,bot}. Maintaining prediction balance helps prevent such bias and improves adaptation stability.
\end{itemize}

To realize active sample selection that satisfies both domain consistency and prediction balance, we maintain a class-split memory $\sM$ to store information of previously seen test samples. These stored samples are later used for model adaptation (the detailed contents of the memory will be discussed in Subsection~\ref{subsec:method:eg_cache}).
The memory $\sM$ consists of $C$ first-in-first-out (FIFO) queues $\sM_1, \cdots, \sM_C$, one for each class. Each queue $\sM_c$ stores up to $K$ most recently observed samples that are predicted as class $c$ by the zero-shot classifier, where $K$ is the maximum capacity per queue. 

When a new test sample $\vx_i$ arrives, we use its image embedding $\vz_i$ to retrieve the top-$k$ most similar samples $\sS_{ic}$ from each class queue. This design \textit{ensures domain consistency}, as retrieved samples are close to $\vx_i$ in the embedding space. Then, by concatenating an equal number of samples from each class queue, we form the final support set $\sS_i$, which \textit{inherently satisfies prediction balance}. Formally, this process can be expressed as
\begin{align}
    \sS_i = \bigcup_{c=1}^C \sS_{ic}, \quad \text{where } \sS_{ic} = \underset{j \in \sM_c}{\mathrm{Top-}k}\left(\vz_i^\top \vz_j \right). 
    \label{eq:select}
\end{align}


\subsection{Embedding-Gradient Cache}
\label{subsec:method:eg_cache}

While the proposed active sample selection enables customized adaptation for each sample, a naive implementation can be computationally prohibitive. In standard TTA, each test sample requires one backward pass for adaptation.\footnote{Following~\cite{eata}, when computing the gradient over a batch of $B$ samples, we count it as $B$ backward passes, since the overall computation cost scales linearly with $B$.} However, if we directly recompute gradients on the support set $\sS_i$, each test sample $\vx_i$ requires $C\cdot k$ backward passes, inflating the overall computational cost by a factor of $C\cdot k$. Such a multiplicative increase makes a straightforward implementation impractical for online or real-time inference. 

To address this issue, we propose an efficient mechanism called \textbf{embedding-gradient cache}, which leverages the point-wise nature of common TTA objectives such as the entropy loss. For the entropy loss, the loss over a batch $\sS_i$ can be written as a weighted average of the sample-level losses: 
\begin{align}
    H(\sS_i) =  \sum_{j \in \sS_i} \alpha_{ij} H(\vx_j), 
\end{align}
where $\alpha_{ij}$ denotes the weight of sample $j$. Consequently, the gradient over the batch also satisfies
\begin{align}
    \nabla_{\vw} H(\sS_i) = \sum_{j \in \sS_i} \alpha_{ij} \vg_j, \quad \text{where } \vg_j = \nabla_{\vw} H(\vx_j). 
    \label{eq:grad}
\end{align}
Therefore, we can cache the sample-level gradients $\vg_i = \nabla_{\vw} H(\vx_j)$ in memory.
In this way, we no longer need to recompute gradients over the support set $\sS_j$; instead, we can simply aggregate the cached gradients to obtain the batch-level gradient efficiently. 

Therefore, we adopt an embedding–gradient cache mechanism.
In the memory, we store the image embeddings $\vz_j$ and the corresponding sample-level gradients $\vg_j$ of previously seen samples. For each incoming sample $i$, we perform the following steps:
\begin{enumerate}
    \item Compute its image embedding $\vz_i$, pseudo-label $\hat{y}_i$, and sample-level gradient $\vg_i$;
    \item Update memory $\sM_{\hat{y}_i}$ with the new entries $(\vz_i, \vg_i)$; 
    \item Retrieve the corresponding support set $\sS_i$ using Eq.~(\ref{eq:select});
    \item Aggregate the cached gradients according to Eq.~(\ref{eq:grad}) and update the model;
    \item Perform inference with the updated model, and then reset the model parameters.
\end{enumerate}

For the aggregation weights $\alpha_{ij}$, we employ two common strategies in TTA: entropy weighting \cite{eata,sar,deyo} and similarity weighting \cite{tda,dmn,latte}:
\begin{align}
    \alpha_{ij} = \exp(- H(\vx_j)) \cdot \exp(- \beta \cdot \| \vz_i - \vz_j \|_2). 
\end{align}
where $H(\vx_j)$ denotes the entropy of sample $j$, and $\beta$ is a hyper-parameter controlling the influence of similarity. 
Entropy weighting assigns larger weights to samples with lower entropy (i.e., higher prediction confidence), as they are considered more reliable. 
Similarity weighting assigns larger weights to samples that are closer to the target sample in the embedding space, since they are more likely to belong to the same domain. 

Although the above discussion focuses on a single test sample, in practice we use a simple reparameterization trick to parallelize sample-level gradient computation within a batch, significantly improving efficiency. Implementation details are provided in Appendix~\ref{appendix:discussion:reparam}. 

\section{Theoretical Analysis}

In this section, we analyze the underlying principles of TTA methods that update normalization layers by minimizing prediction entropy, and explain why active sample selection can further improve their performance.

\paragraph{Setup and assumptions}
Most TTA methods \cite{tent,note,sar,watt,clipartt,mint} adapt models by updating the affine parameters in normalization layers. Therefore, we focus our analysis on this component. Common normalization layers \cite{batchnorm,groupnorm,layernorm,rmsnorm} share a same structure: a normalization step followed by an element-wise affine/linear transformation. Although the normalization step differs across layer types, the parameterization of the affine/linear transformation remains similar. Following \cite{mint}, we consider a single normalization layer in a binary classification setting. Let $\vv_i \in \R^d$ denote the intermediate normalized feature, obtained after the normalization step but before applying its affine transformation. The final image embedding is given by an element-wise linear transformation: 
\begin{align}
    \vz_i = \vv_i \odot \vw + \vb,
\end{align}
where $\odot$ denotes element-wise multiplication, and $\vw, \vb\in \R^d$ is the trainable parameter, initialized as $\vw = \vone, \vb = \vzero$. The text embeddings are given by $\vt_0, \vt_1 \in \R^d$. We omit the temperature parameter $\exp(t)$, as it can be absorbed into the text embeddings. 

We examine how the parameter $\vw = [w_1, \cdots, w_d]^\top$ changes after adaptation. Since scaling $\vw$ by a constant does not affect the direction of the image embedding $\vz_i$ or the prediction, we analyze the following normalized quantity, referred to as the \textit{feature importance}:
\begin{align}
    r_h = \frac{|w_h|}{\sum_{l=1}^d |w_l|}.
\end{align}
A larger $r_h$ indicates that the $h$-th feature contributes more to the prediction. Ideally, class-relevant features should have larger ratios, while domain-specific ones should have smaller ratios to ensure robustness to distribution shifts. Before adaptation, $\vw = \vone$, so all features have equal importance, i.e., $r_h = 1/d, \forall h$. The following Theorem \ref{thm:tta} analyzes how adaptation changes these feature importance.

\begin{restatable}{theorem}{tta}
\label{thm:tta}
    Perform one step of gradient descent on the support set $\sS$ to minimize the prediction entropy. Assume the learning rate $\eta$ is sufficiently small such that it does not change the sign of any element in $\vw$, we have
    \begin{align}
        r_h = \frac{1 + \eta \cdot (\ve_h \odot (\vt_1 - \vt_0))^\top \mM (\vt_1 - \vt_0))}{d + \eta \cdot (\vt_1 - \vt_0)^\top \mM (\vt_1 - \vt_0))}, 
    \end{align}
    where $\ve_h$ is the $h$-th standard basis (i.e., one-hot vector with 1 at the $h$-th position), and $\mM = \E_{j \in \sS}[ (p_{j0} p_{j1}) \cdot \vv_j \vv_j^\top ]$ denotes the probability-weighted uncentered covariance matrix (i.e., the second moment matrix) under the weighting $p_{j0} p_{j1}$. For better clarity, when $\mM$ is a diagonal matrix, i.e., features are uncorrelated, the equation above can be simplified as
    \begin{align}
        r_h = \frac{1 + \eta \cdot (t_{1h} - t_{0h})^2 \cdot M_{hh}}{d + \eta \cdot \sum_{l=1}^d (t_{1l} - t_{0l})^2 \cdot M_{ll}}, 
    \end{align}
    where $M_{hh} = \E_{j \in \sS} [(p_{j0} p_{j1}) \cdot v_{jh}^2]$ denotes the probability-weighted second moment of the $h$-th feature. 
\end{restatable}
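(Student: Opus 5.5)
The plan is a direct computation of the entropy gradient with respect to $\vw$ at the initialization $\vw=\vone$, $\vb=\vzero$, followed by one gradient step and normalization.

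\textbf{Step 1: gradient of the per-sample entropy.} For sample $j$, the logit difference is $s_j = \vz_j^\top(\vt_1-\vt_0)$ with $\vz_j = \vv_j\odot\vw+\vb$, so $p_{j1}=\sigma(s_j)$ and $p_{j0}=1-p_{j1}$. For binary entropy, $dH/ds = -s\,p_{j0}p_{j1}$, using $\log(p_1/p_0)=s$ and $dp_1/ds = p_0p_1$. The chain rule gives $\partial s_j/\partial \vw = \vv_j\odot(\vt_1-\vt_0)$. Hence
\begin{align}
\nabla_{\vw} H(\vx_j) = -\,p_{j0}p_{j1}\, s_j\, \big(\vv_j\odot(\vt_1-\vt_0)\big).
\end{align}
At initialization $s_j = \vv_j^\top(\vt_1-\vt_0)$. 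Writing $\vv_j\odot(\vt_1-\vt_0)=\diag(\vt_1-\vt_0)\vv_j$ and averaging over $\sS$ yields
\begin{align}
\nabla_{\vw} H(\sS) = -\diag(\vt_1-\vt_0)\,\mM\,(\vt_1-\vt_0).
\end{align}

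\textbf{Step 2: one gradient step.} The update is $\vw' = \vone + \eta\,\diag(\vt_1-\vt_0)\mM(\vt_1-\vt_0)$. The $h$-th coordinate is
\begin{align}
w'_h = 1+\eta\,(\ve_h\odot(\vt_1-\vt_0))^\top\mM(\vt_1-\vt_0).
\end{align}

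\textbf{Step 3: normalization.} By the small-$\eta$ assumption every $w'_h>0$, so $|w'_l|=w'_l$ and the denominator of $r_h$ equals $\sum_l w'_l$. Since $\sum_h \ve_h\odot(\vt_1-\vt_0)=\vt_1-\vt_0$, this sum is
\begin{align}
\sum_l w'_l = d+\eta(\vt_1-\vt_0)^\top\mM(\vt_1-\vt_0),
\end{align}
which gives the claimed formula. In the diagonal case, $\mM(\vt_1-\vt_0)$ has entries $M_{hh}(t_{1h}-t_{0h})$. The numerator term then becomes $(t_{1h}-t_{0h})^2M_{hh}$, and the quadratic form in the denominator becomes $\sum_l (t_{1l}-t_{0l})^2 M_{ll}$.

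\textbf{Main obstacle.} The only delicate point is the entropy derivative and its sign, together with how $\mM$ is defined. Checking the derivative: $dH/dp_1 = \log(p_0/p_1) = -s$, so $dH/ds = -s\,p_0p_1$. Minimizing entropy therefore pushes $w_h$ up along the direction given by $s_j$. The scalar $s_j$ combines with $\vv_j$ to form the second moment $\vv_j\vv_j^\top$ only because $\vb=\vzero$ and $\vw=\vone$ at initialization. I will state that the gradient is evaluated there and that the expectation uses the uniform average over $\sS$, with entropy weights absorbed into $\E$ if desired.
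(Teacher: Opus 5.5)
Your proposal is correct and follows essentially the same route as the paper: a chain-rule computation of the entropy gradient at $\vw=\vone,\vb=\vzero$, giving $-\diag(\vt_1-\vt_0)\mM(\vt_1-\vt_0)$, then one descent step. The differences are cosmetic. You differentiate through the logit difference $s_j$ rather than through each logit separately. You also spell out the normalization step, namely positivity of each $w'_h$ and the identity $\sum_h \ve_h\odot(\vt_1-\vt_0)=\vt_1-\vt_0$, which the paper leaves implicit.
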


\begin{remark}[Principle of TTA]
    Theorem \ref{thm:tta} implies that entropy minimization amplifies the contribution of features whose $(t_{1h} - t_{0h})^2 \cdot M_{hh}$ is above average, while suppressing those with smaller values. In other words, it performs \textit{feature reweighting} based on two factors: 
    \begin{itemize}
        \item Larger $(t_{1h} - t_{0h})^2$, which indicates that the \textbf{text embeddings} of the two classes differ more along dimension $h$, meaning that \textit{this feature is discriminative in the text description}; 
        \item Larger $M_{hh} = \E[(p_{i0} p_{i1}) \cdot v_{ih}^2]$, which captures greater variance within the \textbf{image embeddings} in target domain, reflecting \textit{distributional information of target domain}. 
    \end{itemize}
    To build an intuition for this reweighting effect, we qualitatively discuss how features of different types may behave under this adaptation: 
    \begin{itemize}
        \item \textbf{Class-relevant features.} These features vary significantly across samples from different classes. This leads to a large variance $M_{hh}$, which the model interprets as containing discriminative signals; such features are therefore amplified during adaptation. 
        \item \textbf{Domain-relevant features.} Under mixed-domain shift, the dataset contains samples from multiple domains. As a result, a domain-relevant feature can also exhibit large variance $M_{hh}$. Therefore, domain-specific features may \textit{not} be sufficiently suppressed and can even be mistakenly enhanced, leading to degraded performance.
    \end{itemize}
\end{remark}

\begin{remark}[Effect of active sample selection]
    The two criteria in active sample selection can enhance the effectiveness of entropy minimization under mixed-domain shifts:
    \begin{itemize}
        \item \textbf{Domain consistency.} Retrieving samples that are close to the query within each pseudo-class ensures that class-irrelevant features, including domain-relevant ones, are more consistent within the constructed support set. As a result, their variance $M_{hh}$ becomes smaller, allowing them to be more effectively suppressed.
        \item \textbf{Prediction balance.} Maintaining a balanced number of samples across pseudo-classes keeps the class-relevant features exhibiting large variance $M_{hh}$, so that these discriminative features are still amplified during adaptation.
    \end{itemize}
\end{remark}

\section{Experiments}

\renewcommand{\meansd}[2]{#1}  

\begin{table*}
    \centering
    \caption{Mean accuracy (\%) on corruption benchmarks under mixture of 15 corruptions. Best and second-best results are shown in \textbf{bold} and \underline{underlined}, respectively. Error bars are reported in Appendix \ref{appendix:exp:error_bars}.}
    \label{tab:corruption}
    \footnotesize	
    \setlength{\tabcolsep}{3.0pt}{
        \begin{tabular}{llccccccccccccccc >{\columncolor{cvprblue!15}}c}
            \toprule
            & & \multicolumn{16}{c}{ViT-B/32 on CIFAR-10-C} \\
            \cmidrule(lr){3-18}
            Method & Venue & \multicolumn{3}{c}{Noise} & \multicolumn{4}{c}{Blur} & \multicolumn{4}{c}{Weather} & \multicolumn{4}{c}{Digital} & \multirow{2.5}{*}{Avg.} \cellcolor{white} \\
            \cmidrule(lr){3-5} \cmidrule(lr){6-9} \cmidrule(lr){10-13} \cmidrule(lr){14-17}
            & & Gauss. & Shot & Impul. 
            & Defoc. & Glass & Motion & Zoom  
            & Snow & Frost & Fog & Brit. 
            & Contr. & Elastic & Pixel & JPEG \\
            \midrule
            CLIP \cite{clip} & ICML'21 &
                35.5 & 40.0 & 43.2 & 70.0 & 41.4 & 64.5 & 70.2 & 70.8 & 72.3 & 66.7 & 81.4 & 64.5 & 59.6 & 48.2 & 56.7 & 59.0 \\
            Ensemble & - &
                38.6 & 42.6 & 42.7 & 72.4 & 43.9 & 66.6 & 71.6 & 73.8 & 75.7 & 69.0 & 83.6 & 67.0 & 61.9 & 51.8 & 58.6 & 61.3 \\
            Tent \cite{tent} & ICLR'21 & 
                \meansd{39.5}{0.6} & \meansd{43.1}{0.4} & \meansd{46.4}{0.3} & \meansd{\underline{77.8}}{0.3} & \meansd{56.6}{0.3} & \meansd{72.7}{0.4} & \meansd{77.9}{0.4} & \meansd{\underline{79.7}}{0.2} & \meansd{79.9}{0.2} & \meansd{74.0}{0.3} & \meansd{86.9}{0.1} & \meansd{75.5}{0.1} & \meansd{69.7}{0.3} & \meansd{60.3}{0.1} & \meansd{63.9}{0.4} & \meansd{66.9}{0.2} \\
            NOTE \cite{note} & NeurIPS'22 & 
                \meansd{44.9}{4.4} & \meansd{48.3}{3.9} & \meansd{48.9}{1.7} & \meansd{77.1}{0.5} & \meansd{57.0}{0.7} & \meansd{74.0}{1.4} & \meansd{77.6}{0.7} & \meansd{78.0}{0.8} & \meansd{79.2}{0.9} & \meansd{73.0}{1.4} & \meansd{86.5}{0.5} & \meansd{73.5}{1.8} & \meansd{68.7}{1.1} & \meansd{57.9}{1.5} & \meansd{62.2}{0.7} & \meansd{67.1}{0.8} \\
            SAR \cite{sar} & ICLR'23 & 
                \meansd{48.8}{0.7} & \meansd{51.8}{0.6} & \meansd{51.0}{0.5} & \meansd{77.5}{0.2} & \meansd{60.5}{0.2} & \meansd{74.8}{0.5} & \meansd{\textbf{79.2}}{0.3} & \meansd{79.6}{0.2} & \meansd{\textbf{80.7}}{0.4} & \meansd{76.6}{0.4} & \meansd{\underline{87.4}}{0.1} & \meansd{77.0}{0.2} & \meansd{71.0}{0.3} & \meansd{57.5}{0.4} & \meansd{62.8}{0.3} & \meansd{69.1}{0.2} \\
            RoTTA \cite{rotta} & CVPR'23 & 
                \meansd{\underline{53.1}}{1.3} & \meansd{\underline{56.4}}{1.4} & \meansd{\underline{51.3}}{0.9} & \meansd{\textbf{78.1}}{0.5} & \meansd{\underline{61.0}}{1.0} & \meansd{75.2}{0.2} & \meansd{\underline{79.0}}{0.4} & \meansd{79.0}{0.6} & \meansd{80.1}{1.0} & \meansd{\textbf{80.0}}{0.4} & \meansd{85.6}{0.5} & \meansd{\textbf{82.9}}{0.4} & \meansd{\textbf{73.7}}{0.9} & \meansd{\underline{65.3}}{0.9} & \meansd{\textbf{69.9}}{0.7} & \meansd{\underline{71.4}}{0.4} \\
            TDA \cite{tda} & CVPR'24 & 
                \meansd{39.8}{0.4} & \meansd{42.8}{0.4} & \meansd{43.4}{0.3} & \meansd{73.5}{0.3} & \meansd{45.4}{0.2} & \meansd{67.6}{0.2} & \meansd{73.2}{0.2} & \meansd{74.3}{0.2} & \meansd{76.4}{0.3} & \meansd{69.8}{0.2} & \meansd{84.0}{0.2} & \meansd{66.6}{0.2} & \meansd{62.5}{0.1} & \meansd{52.1}{0.3} & \meansd{57.7}{0.4} & \meansd{61.9}{0.1} \\
            DMN-ZS \cite{dmn} & CVPR'24 & 
                \meansd{38.4}{0.1} & \meansd{41.7}{0.1} & \meansd{42.4}{0.1} & \meansd{73.1}{0.0} & \meansd{44.6}{0.0} & \meansd{67.2}{0.1} & \meansd{72.8}{0.1} & \meansd{74.6}{0.1} & \meansd{76.3}{0.0} & \meansd{69.5}{0.0} & \meansd{84.0}{0.0} & \meansd{66.5}{0.0} & \meansd{62.3}{0.0} & \meansd{52.4}{0.1} & \meansd{58.3}{0.1} & \meansd{61.6}{0.0} \\
            WATT-S \cite{watt} & NeurIPS'24 & 
                \meansd{47.4}{0.3} & \meansd{49.7}{0.1} & \meansd{49.5}{0.3} & \meansd{77.0}{0.2} & \meansd{54.3}{0.2} & \meansd{72.9}{0.2} & \meansd{77.3}{0.2} & \meansd{77.8}{0.2} & \meansd{79.5}{0.1} & \meansd{74.8}{0.2} & \meansd{86.4}{0.1} & \meansd{74.1}{0.2} & \meansd{67.8}{0.2} & \meansd{57.2}{0.4} & \meansd{62.9}{0.2} & \meansd{67.2}{0.0} \\
            CLIPArTT \cite{clipartt} & WACV'25 & 
                \meansd{37.2}{0.2} & \meansd{41.5}{0.2} & \meansd{44.8}{0.2} & \meansd{70.5}{0.1} & \meansd{48.9}{0.2} & \meansd{65.5}{0.2} & \meansd{70.7}{0.3} & \meansd{72.9}{0.3} & \meansd{75.3}{0.1} & \meansd{67.4}{0.2} & \meansd{82.8}{0.3} & \meansd{66.9}{0.1} & \meansd{61.9}{0.3} & \meansd{58.5}{0.2} & \meansd{59.5}{0.3} & \meansd{61.6}{0.1} \\
            Mint \cite{mint} & NeurIPS'25 & 
                \meansd{47.9}{0.1} & \meansd{50.9}{0.1} & \meansd{50.3}{0.1} & \meansd{74.1}{0.0} & \meansd{51.3}{0.1} & \meansd{\underline{75.8}}{0.0} & \meansd{76.9}{0.0} & \meansd{76.4}{0.1} & \meansd{77.0}{0.1} & \meansd{73.8}{0.1} & \meansd{85.8}{0.1} & \meansd{73.8}{0.0} & \meansd{65.8}{0.1} & \meansd{47.3}{0.0} & \meansd{56.4}{0.1} & \meansd{65.6}{0.0} \\
            {\algname} & - & 
                \meansd{\textbf{61.0}}{0.4} & \meansd{\textbf{63.3}}{0.3} & \meansd{\textbf{56.0}}{0.2} & \meansd{77.2}{0.1} & \meansd{\textbf{64.1}}{0.2} & \meansd{\textbf{77.2}}{0.1} & \meansd{78.9}{0.3} & \meansd{\textbf{80.9}}{0.1} & \meansd{\underline{80.3}}{0.2} & \meansd{\underline{78.0}}{0.2} & \meansd{\textbf{88.1}}{0.2} & \meansd{\underline{79.8}}{0.2} & \meansd{\underline{72.0}}{0.1} & \meansd{\textbf{65.5}}{0.1} & \meansd{\underline{67.5}}{0.2} & \meansd{\textbf{72.7}}{0.1} \\
            \midrule 
            & & \multicolumn{16}{c}{ViT-B/16 on CIFAR-100-C} \\
            \cmidrule(lr){3-18}
            Method & Venue & \multicolumn{3}{c}{Noise} & \multicolumn{4}{c}{Blur} & \multicolumn{4}{c}{Weather} & \multicolumn{4}{c}{Digital} & \multirow{2.5}{*}{Avg.} \cellcolor{white}\\
            \cmidrule(lr){3-5} \cmidrule(lr){6-9} \cmidrule(lr){10-13} \cmidrule(lr){14-17}
            & & Gauss. & Shot & Impul. 
            & Defoc. & Glass & Motion & Zoom  
            & Snow & Frost & Fog & Brit. 
            & Contr. & Elastic & Pixel & JPEG \\
            \midrule
            CLIP \cite{clip} & ICML'21 &
                19.7 & 21.4 & 25.3 & 42.5 & 20.2 & 43.1 & 48.0 & 48.4 & 49.7 & 41.7 & 57.0 & 34.5 & 29.2 & 23.9 & 32.4 & 35.8 \\
            Ensemble & - &
                22.9 & 24.4 & 29.6 & 43.6 & 20.1 & 43.7 & 48.7 & 48.9 & 50.4 & 41.8 & 58.1 & 35.3 & 29.2 & 26.3 & 33.6 & 37.1 \\
            Tent \cite{tent} & ICLR'21 & 
                \meansd{24.9}{0.2} & \meansd{26.7}{0.2} & \meansd{34.4}{0.1} & \meansd{49.7}{0.1} & \meansd{23.9}{0.2} & \meansd{47.5}{0.1} & \meansd{53.9}{0.1} & \meansd{52.9}{0.1} & \meansd{51.4}{0.2} & \meansd{45.3}{0.2} & \meansd{62.9}{0.2} & \meansd{43.6}{0.1} & \meansd{32.0}{0.1} & \meansd{\underline{31.7}}{0.1} & \meansd{37.1}{0.1} & \meansd{41.2}{0.0} \\
            NOTE \cite{note} & NeurIPS'22 & 
                \meansd{25.7}{1.4} & \meansd{27.3}{1.8} & \meansd{35.4}{1.2} & \meansd{49.9}{0.7} & \meansd{23.9}{1.2} & \meansd{47.6}{0.7} & \meansd{54.2}{1.1} & \meansd{52.7}{0.7} & \meansd{51.8}{1.2} & \meansd{45.9}{0.5} & \meansd{63.7}{0.9} & \meansd{44.0}{1.1} & \meansd{32.4}{1.4} & \meansd{30.2}{2.1} & \meansd{36.0}{0.8} & \meansd{41.4}{0.8} \\
            SAR \cite{sar} & ICLR'23 & 
                \meansd{\underline{28.4}}{0.2} & \meansd{\underline{30.5}}{0.3} & \meansd{\underline{38.5}}{0.2} & \meansd{\underline{50.4}}{0.2} & \meansd{24.6}{0.4} & \meansd{\underline{49.2}}{0.2} & \meansd{\underline{55.1}}{0.2} & \meansd{54.1}{0.2} & \meansd{53.4}{0.2} & \meansd{47.2}{0.2} & \meansd{\underline{64.0}}{0.3} & \meansd{45.0}{0.2} & \meansd{33.6}{0.1} & \meansd{29.7}{0.2} & \meansd{37.4}{0.2} & \meansd{\underline{42.7}}{0.1} \\
            RoTTA \cite{rotta} & CVPR'23 & 
                \meansd{22.9}{0.6} & \meansd{24.5}{0.8} & \meansd{32.4}{1.0} & \meansd{47.9}{1.2} & \meansd{\underline{25.4}}{1.2} & \meansd{46.5}{1.2} & \meansd{50.1}{0.9} & \meansd{50.5}{1.4} & \meansd{50.7}{1.1} & \meansd{\underline{50.8}}{0.7} & \meansd{58.3}{1.0} & \meansd{\textbf{53.1}}{0.7} & \meansd{\textbf{37.7}}{0.8} & \meansd{29.9}{1.0} & \meansd{36.2}{0.8} & \meansd{41.1}{0.8} \\
            TDA \cite{tda} & CVPR'24 & 
                \meansd{23.4}{0.1} & \meansd{25.2}{0.2} & \meansd{30.3}{0.2} & \meansd{44.1}{0.2} & \meansd{20.3}{0.2} & \meansd{43.8}{0.1} & \meansd{49.1}{0.2} & \meansd{49.3}{0.2} & \meansd{51.3}{0.1} & \meansd{42.2}{0.1} & \meansd{58.7}{0.1} & \meansd{36.1}{0.1} & \meansd{29.3}{0.1} & \meansd{26.4}{0.2} & \meansd{33.6}{0.1} & \meansd{37.5}{0.0} \\
            DMN-ZS \cite{dmn} & CVPR'24 & 
                \meansd{22.9}{0.1} & \meansd{24.4}{0.2} & \meansd{29.6}{0.1} & \meansd{44.3}{0.1} & \meansd{20.2}{0.1} & \meansd{44.1}{0.0} & \meansd{49.4}{0.1} & \meansd{49.7}{0.1} & \meansd{51.1}{0.1} & \meansd{42.2}{0.1} & \meansd{58.8}{0.1} & \meansd{35.5}{0.1} & \meansd{29.4}{0.0} & \meansd{26.2}{0.1} & \meansd{34.0}{0.1} & \meansd{37.5}{0.0} \\
            WATT-S \cite{watt} & NeurIPS'24 & 
                \meansd{26.4}{0.2} & \meansd{28.4}{0.3} & \meansd{35.6}{0.3} & \meansd{50.2}{0.3} & \meansd{24.2}{0.3} & \meansd{48.5}{0.2} & \meansd{54.4}{0.2} & \meansd{\underline{54.2}}{0.3} & \meansd{\underline{54.0}}{0.2} & \meansd{47.7}{0.3} & \meansd{63.5}{0.4} & \meansd{43.3}{0.2} & \meansd{34.0}{0.2} & \meansd{31.4}{0.2} & \meansd{\underline{37.7}}{0.4} & \meansd{42.2}{0.1} \\
            CLIPArTT \cite{clipartt} & WACV'25 & 
                \meansd{21.1}{0.3} & \meansd{22.8}{0.2} & \meansd{29.4}{0.2} & \meansd{46.3}{0.1} & \meansd{24.5}{0.3} & \meansd{45.0}{0.1} & \meansd{51.1}{0.1} & \meansd{51.5}{0.1} & \meansd{51.8}{0.2} & \meansd{44.0}{0.3} & \meansd{61.1}{0.2} & \meansd{39.0}{0.2} & \meansd{33.2}{0.2} & \meansd{28.3}{0.3} & \meansd{36.7}{0.1} & \meansd{39.0}{0.1} \\
            Mint \cite{mint} & NeurIPS'25 & 
                \meansd{22.3}{0.0} & \meansd{24.5}{0.1} & \meansd{33.0}{0.1} & \meansd{49.2}{0.1} & \meansd{22.4}{0.1} & \meansd{47.8}{0.1} & \meansd{54.2}{0.1} & \meansd{52.4}{0.0} & \meansd{51.6}{0.1} & \meansd{47.7}{0.1} & \meansd{63.7}{0.1} & \meansd{42.2}{0.1} & \meansd{31.8}{0.0} & \meansd{24.0}{0.1} & \meansd{34.3}{0.0} & \meansd{40.1}{0.0} \\
            {\algname} & - & 
                \meansd{\textbf{30.3}}{0.3} & \meansd{\textbf{32.9}}{0.4} & \meansd{\textbf{42.8}}{0.3} & \meansd{\textbf{52.5}}{0.3} & \meansd{\textbf{29.1}}{0.4} & \meansd{\textbf{52.2}}{0.2} & \meansd{\textbf{55.5}}{0.1} & \meansd{\textbf{55.4}}{0.1} & \meansd{\textbf{55.1}}{0.2} & \meansd{\textbf{52.6}}{0.2} & \meansd{\textbf{65.9}}{0.2} & \meansd{\underline{51.0}}{0.2} & \meansd{\underline{36.8}}{0.2} & \meansd{\textbf{40.4}}{0.4} & \meansd{\textbf{39.6}}{0.2} & \meansd{\textbf{46.1}}{0.1} \\
            \midrule 
            & & \multicolumn{16}{c}{ViT-L/14 on ImageNet-C} \\
            \cmidrule(lr){3-18}
            Method & Venue & \multicolumn{3}{c}{Noise} & \multicolumn{4}{c}{Blur} & \multicolumn{4}{c}{Weather} & \multicolumn{4}{c}{Digital} & \multirow{2.5}{*}{Avg.} \cellcolor{white}\\
            \cmidrule(lr){3-5} \cmidrule(lr){6-9} \cmidrule(lr){10-13} \cmidrule(lr){14-17}
            & & Gauss. & Shot & Impul. 
            & Defoc. & Glass & Motion & Zoom  
            & Snow & Frost & Fog & Brit. 
            & Contr. & Elastic & Pixel & JPEG \\
            \midrule
            CLIP \cite{clip} & ICML'21 &
                27.4 & 29.4 & 28.7 & 34.6 & 25.3 & 41.0 & 36.7 & 49.8 & 44.1 & 49.7 & 65.4 & 35.1 & 30.3 & 53.5 & 42.2 & 39.6 \\
            Ensemble & - &
                29.1 & 30.4 & 30.1 & 37.5 & 27.3 & 44.2 & 39.2 & 52.4 & 46.4 & 52.6 & 67.8 & 34.4 & 32.4 & 56.2 & 44.2 & 41.6 \\
            Tent \cite{tent} & ICLR'21 & 
                \meansd{36.3}{0.4} & \meansd{38.0}{0.1} & \meansd{38.5}{0.1} & \meansd{\underline{40.3}}{0.3} & \meansd{37.4}{0.5} & \meansd{\underline{47.7}}{0.3} & \meansd{\underline{43.5}}{0.4} & \meansd{54.2}{0.4} & \meansd{49.5}{0.2} & \meansd{56.4}{0.3} & \meansd{67.8}{0.2} & \meansd{45.4}{0.7} & \meansd{40.6}{0.5} & \meansd{57.6}{0.3} & \meansd{46.6}{0.3} & \meansd{\underline{46.7}}{0.1} \\
            NOTE \cite{note} & NeurIPS'22 & 
                \meansd{36.3}{0.4} & \meansd{38.0}{0.2} & \meansd{38.6}{0.3} & \meansd{40.0}{0.4} & \meansd{37.3}{0.4} & \meansd{47.6}{0.3} & \meansd{\underline{43.5}}{0.3} & \meansd{\underline{54.4}}{0.4} & \meansd{49.7}{0.5} & \meansd{56.4}{0.5} & \meansd{67.8}{0.3} & \meansd{44.5}{0.4} & \meansd{40.8}{0.4} & \meansd{57.5}{0.3} & \meansd{46.2}{0.5} & \meansd{46.6}{0.1} \\
            SAR \cite{sar} & ICLR'23 & 
                \meansd{36.3}{0.8} & \meansd{38.3}{0.4} & \meansd{38.5}{0.4} & \meansd{39.2}{0.7} & \meansd{\underline{38.5}}{0.8} & \meansd{46.8}{0.5} & \meansd{43.4}{0.6} & \meansd{53.9}{0.3} & \meansd{\underline{50.4}}{0.5} & \meansd{\underline{56.6}}{0.4} & \meansd{66.8}{0.2} & \meansd{\underline{45.5}}{0.1} & \meansd{\underline{41.8}}{0.8} & \meansd{56.2}{0.6} & \meansd{46.6}{0.5} & \meansd{46.6}{0.3} \\
            RoTTA \cite{rotta} & CVPR'23 & 
                \meansd{\underline{37.4}}{0.5} & \meansd{\underline{38.7}}{0.4} & \meansd{\underline{39.6}}{0.7} & \meansd{36.9}{0.5} & \meansd{34.5}{0.5} & \meansd{44.0}{0.5} & \meansd{39.3}{0.7} & \meansd{53.5}{0.6} & \meansd{48.6}{0.6} & \meansd{54.2}{0.7} & \meansd{67.2}{0.3} & \meansd{43.0}{0.8} & \meansd{39.7}{0.6} & \meansd{56.3}{0.5} & \meansd{\underline{48.4}}{0.6} & \meansd{45.4}{0.4} \\
            TDA \cite{tda} & CVPR'24 & 
                \meansd{29.6}{0.1} & \meansd{31.0}{0.1} & \meansd{31.5}{0.2} & \meansd{38.1}{0.1} & \meansd{28.9}{0.1} & \meansd{44.6}{0.1} & \meansd{40.0}{0.1} & \meansd{53.4}{0.1} & \meansd{47.5}{0.1} & \meansd{53.3}{0.2} & \meansd{\underline{68.5}}{0.1} & \meansd{39.1}{0.2} & \meansd{33.4}{0.2} & \meansd{57.1}{0.2} & \meansd{45.1}{0.2} & \meansd{42.7}{0.0} \\
            DMN-ZS \cite{dmn} & CVPR'24 & 
                \meansd{29.2}{0.1} & \meansd{30.5}{0.1} & \meansd{30.2}{0.0} & \meansd{37.6}{0.1} & \meansd{27.6}{0.1} & \meansd{44.4}{0.1} & \meansd{39.2}{0.1} & \meansd{52.5}{0.0} & \meansd{46.6}{0.1} & \meansd{52.7}{0.0} & \meansd{67.9}{0.0} & \meansd{33.8}{0.1} & \meansd{32.5}{0.0} & \meansd{56.5}{0.1} & \meansd{44.5}{0.1} & \meansd{41.7}{0.0} \\
            WATT-S \cite{watt} & NeurIPS'24 & 
                \meansd{31.8}{0.1} & \meansd{33.1}{0.1} & \meansd{33.6}{0.2} & \meansd{39.2}{0.2} & \meansd{30.7}{0.3} & \meansd{45.7}{0.1} & \meansd{41.3}{0.1} & \meansd{53.5}{0.2} & \meansd{47.3}{0.3} & \meansd{53.7}{0.2} & \meansd{67.9}{0.3} & \meansd{40.1}{0.1} & \meansd{34.7}{0.3} & \meansd{57.3}{0.1} & \meansd{45.6}{0.3} & \meansd{43.7}{0.0} \\
            CLIPArTT \cite{clipartt} & WACV'25 &
                \meansd{28.2}{0.3} & \meansd{30.2}{0.3} & \meansd{29.8}{0.2} & \meansd{35.0}{0.2} & \meansd{26.5}{0.3} & \meansd{41.4}{0.2} & \meansd{37.3}{0.1} & \meansd{50.3}{0.3} & \meansd{43.7}{0.3} & \meansd{49.7}{0.2} & \meansd{65.0}{0.2} & \meansd{38.5}{0.2} & \meansd{31.0}{0.3} & \meansd{53.5}{0.2} & \meansd{42.3}{0.3} & \meansd{40.2}{0.1} \\
            Mint \cite{mint} & NeurIPS'25 & 
                \meansd{33.8}{0.2} & \meansd{35.2}{0.2} & \meansd{35.8}{0.2} & \meansd{39.3}{0.2} & \meansd{33.3}{0.3} & \meansd{46.2}{0.2} & \meansd{41.5}{0.2} & \meansd{53.6}{0.3} & \meansd{47.4}{0.2} & \meansd{53.3}{0.1} & \meansd{67.7}{0.1} & \meansd{36.3}{0.5} & \meansd{38.2}{0.3} & \meansd{\underline{58.1}}{0.1} & \meansd{47.4}{0.7} & \meansd{44.5}{0.1} \\
            {\algname} & - & 
                \meansd{\textbf{37.5}}{0.2} & \meansd{\textbf{38.8}}{0.1} & \meansd{\textbf{41.1}}{0.2} & \meansd{\textbf{42.2}}{0.3} & \meansd{\textbf{39.6}}{0.4} & \meansd{\textbf{49.9}}{0.4} & \meansd{\textbf{46.2}}{0.4} & \meansd{\textbf{57.3}}{0.3} & \meansd{\textbf{51.6}}{0.1} & \meansd{\textbf{59.4}}{0.1} & \meansd{\textbf{68.7}}{0.2} & \meansd{\textbf{46.4}}{0.4} & \meansd{\textbf{45.2}}{0.2} & \meansd{\textbf{59.4}}{0.2} & \meansd{\textbf{54.7}}{0.2} & \meansd{\textbf{49.2}}{0.1} \\
            \bottomrule 
        \end{tabular}
    }
\end{table*}
\begin{table}
    \centering
    \caption{Mean accuracy (\%) on DomainNet under mixture of six domains. Best and second-best results are shown in \textbf{bold} and \underline{underlined}, respectively. Error bars are reported in Appendix \ref{appendix:exp:error_bars}.}
    \label{tab:domainnet}
    \footnotesize	
    \setlength{\tabcolsep}{1.9pt}{
        \begin{tabular}{llcccccc >{\columncolor{cvprblue!15}}c}
            \toprule
            \multirow{2.5}{*}{Method} & \multirow{2.5}{*}{Venue} & \multicolumn{7}{c}{ViT-B/32 on DomainNet} \\
            \cmidrule(lr){3-9}
            & & clip & info & paint & quick & real & sketch & Avg. \cellcolor{white} \\
            \midrule
            CLIP \cite{clip} & ICML'21 & 
                67.6 & 41.5 & 62.7 & 12.8 & 81.0 & 58.1 & 54.0 \\
            Ensemble & - & 
                68.9 & 44.2 & 64.7 & 13.3 & 82.3 & 60.3 & 55.6 \\
            Tent \cite{tent} & ICLR'21 &
                \meansd{69.1}{0.0} & \meansd{44.3}{0.0} & \meansd{64.9}{0.0} & \meansd{13.0}{0.0} & \meansd{82.3}{0.0} & \meansd{60.3}{0.0} & \meansd{55.7}{0.0} \\
            NOTE \cite{note} & NeurIPS'22 &
                \meansd{\underline{69.6}}{0.1} & \meansd{\underline{45.1}}{0.0} & \meansd{65.0}{0.0} & \meansd{16.4}{0.1} & \meansd{82.1}{0.0} & \meansd{\underline{60.8}}{0.0} & \meansd{\underline{56.5}}{0.0} \\
            SAR \cite{sar} & ICLR'23 & 
                \meansd{69.4}{0.0} & \meansd{44.7}{0.0} & \meansd{65.2}{0.1} & \meansd{14.5}{0.0} & \meansd{\underline{82.5}}{0.0} & \meansd{\underline{60.8}}{0.0} & \meansd{56.2}{0.0} \\
            RoTTA \cite{rotta} & CVPR'23 & 
                \meansd{69.4}{0.1} & \meansd{44.8}{0.0} & \meansd{\underline{65.3}}{0.0} & \meansd{14.3}{0.0} & \meansd{82.4}{0.0} & \meansd{\underline{60.8}}{0.0} & \meansd{56.2}{0.0} \\
            TDA \cite{tda} & CVPR'24 & 
                \meansd{68.9}{0.0} & \meansd{44.7}{0.0} & \meansd{65.1}{0.1} & \meansd{14.6}{0.1} & \meansd{82.3}{0.0} & \meansd{60.5}{0.0} & \meansd{56.0}{0.0} \\
            DMN-ZS \cite{dmn} & CVPR'24 & 
                \meansd{68.9}{0.0} & \meansd{44.6}{0.0} & \meansd{64.8}{0.0} & \meansd{12.7}{0.0} & \meansd{\textbf{82.6}}{0.0} & \meansd{60.2}{0.0} & \meansd{55.6}{0.0} \\
            WATT-S \cite{watt} & NeurIPS'24 & 
                \meansd{68.3}{0.0} & \meansd{42.8}{0.1} & \meansd{63.5}{0.1} & \meansd{\underline{17.0}}{0.0} & \meansd{81.4}{0.0} & \meansd{59.9}{0.1} & \meansd{55.5}{0.0} \\
            CLIPArTT \cite{clipartt} & WACV'25 & 
                \meansd{67.2}{0.0} & \meansd{41.9}{0.0} & \meansd{62.9}{0.0} & \meansd{12.6}{0.0} & \meansd{80.8}{0.0} & \meansd{57.8}{0.1} & \meansd{53.9}{0.0} \\
            Mint \cite{mint} & NeurIPS'25 & 
                \meansd{69.3}{0.0} & \meansd{44.4}{0.0} & \meansd{65.0}{0.0} & \meansd{15.0}{0.0} & \meansd{82.3}{0.0} & \meansd{60.5}{0.0} & \meansd{56.1}{0.0} \\
            {\algname} & - & 
                \meansd{\textbf{70.1}}{0.0} & \meansd{\textbf{45.2}}{0.0} & \meansd{\textbf{65.8}}{0.0} & \meansd{\textbf{17.6}}{0.1} & \meansd{82.4}{0.0} & \meansd{\textbf{61.5}}{0.0} & \meansd{\textbf{57.1}}{0.0} \\
            \bottomrule 
        \end{tabular}
    }
\end{table}

In this section, we conduct experiments to answer the following research questions:
\begin{itemize}
    \item \textbf{RQ1}: Compared with existing TTA methods, is {\algname} more robust under mixture-of-domain settings and able to achieve better performance?
    \item \textbf{RQ2}: What types of samples does {\algname} retrieve as the support set for adaptation?
    \item \textbf{RQ3}: Does the design of {\algname} ensure computational efficiency in practice?
\end{itemize}


\paragraph{Datasets}
Following the evaluation protocol of SAR \cite{sar}, we conduct experiments under mixed distribution shifts on standard corruption benchmarks \cite{corruption}: ViT-B/32 \cite{vit} on CIFAR-10-C \cite{cifar}, ViT-B/16 on CIFAR-100-C, and ViT-L/14 on ImageNet-C \cite{imagenet}. We further evaluate ViT-B/32 on DomainNet \cite{domainnet} to assess the performance of {\algname} under mixture of domain shifts. Note that prior VLM studies \cite{tpt,tda,dmn} typically evaluate models under natural distribution shifts (e.g., ImageNet-V2, -A, -R, -Sketch)~\cite{recht2019imagenet, hendrycks2021natural, hendrycks2021many, wang2019learning} or cross-domain benchmarks (e.g., Flower102, Food101)~\cite{bossard14}. However, these datasets each represent a single domain and are therefore not suitable for evaluating mixed-domain shifts. 

\paragraph{Baselines}
We compare our method with a wide range of baselines. For generic TTA methods, we include Tent \cite{tent}, NOTE \cite{note}, SAR \cite{sar}, and RoTTA \cite{rotta}. Among them, SAR is specifically designed for wild test scenarios, including mixed-domain settings, while NOTE and RoTTA also employ prediction-balanced memory banks but do not explicitly account for domain mixtures.
For VLM-based TTA methods, we consider two memory-based approaches, TDA \cite{tda} and DMN-ZS \cite{dmn}, as well as three recent and highly competitive baselines: WATT-S \cite{watt}, CLIPArTT \cite{clipartt}, and Mint \cite{mint}. CLIP and Ensemble represent the zero-shot performance of CLIP using a single template (``\texttt{a photo of a \{class\}}'') and seven templates from \cite{tip-adaptor}, respectively. For all TTA methods, except from CLIPArTT which modifies the prompts, also use the seven templates. More details of experiments setup, including hyperparameter, are provided in Appendix \ref{appendix:exp:setup}. 


\paragraph{Main results (RQ1)}
We evaluate all methods under mixed-domain settings, where test samples from multiple domains are fully interleaved during adaptation rather than evaluated separately. Following prior works \cite{watt,mint}, we report the accuracy for each domain individually and then average them across all domains for a fair comparison. The results are summarized in Table \ref{tab:corruption} and Table \ref{tab:domainnet}. Overall, {\algname} consistently achieves the best performance across all datasets and architectures. Specifically, it improves the average accuracy by +1.3\% on CIFAR-10-C, +3.4\% on CIFAR-100-C, +2.5\% on ImageNet-C, and +0.6\% on DomainNet compared with the strongest baseline. 


\begin{figure}
    \centering
    \includegraphics[width=1.0\linewidth]{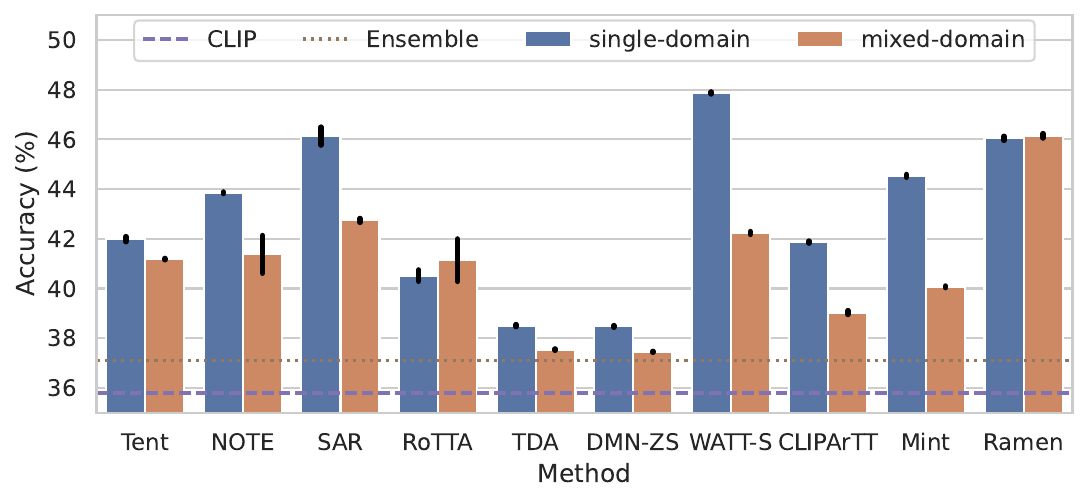}
    \caption{Performance comparison of TTA methods under single-domain and mixed-domain shifts on CIFAR-100-C. Results on other datasets are provided in Appendix \ref{appendix:exp:single_vs_mixed}. }
    \label{fig:single_vs_mixed}
\end{figure}

\paragraph{Robustness to domain mixture (RQ1)}
We further compare the performance of different methods under both single-domain and mixed-domain settings.
In the single-domain evaluation, the model is tested on each domain separately without mixing test samples, and we report the average accuracy across all domains. 
The results are presented in Figure~\ref{fig:single_vs_mixed}.
We observe that existing baseline methods, especially recent VLM-based TTA approaches, perform well under the single-domain setting.
However, most of them suffer from significant performance drops when domains are mixed, while earlier methods are also affected to a lesser extent.
In contrast, {\algname} maintains consistently strong performance across both settings.


\begin{figure}
    \centering
    \includegraphics[width=0.9\linewidth]{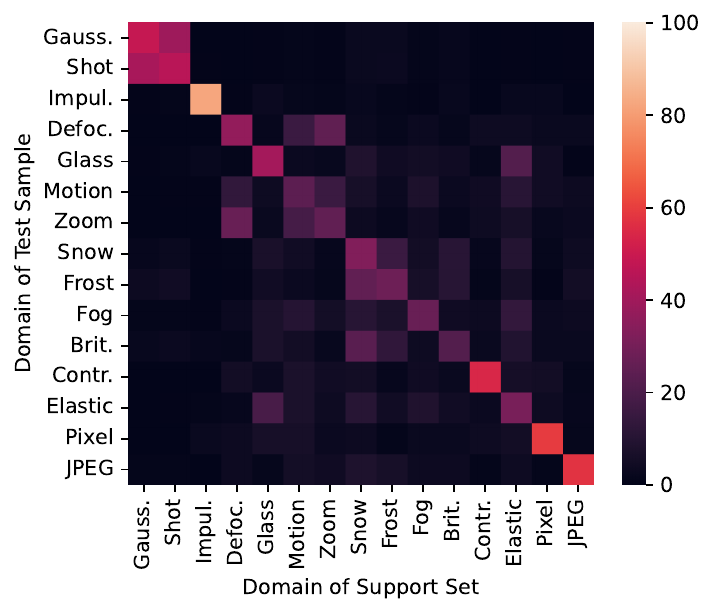}
    \caption{Visualization of active sample selection on CIFAR-100-C. Each entry $(i, j)$ indicates the average proportion (\%) of support samples from domain $j$ when the test sample comes from domain $i$. On average, 40.9\% of the support samples come from the same domain as the test sample, which is substantially higher than random selection (6.7\%). Results on other datasets are provided in Appendix \ref{appendix:exp:vis_mat}.}
    \label{fig:visual}
\end{figure}

\paragraph{Visualization of active sample selection (RQ2)}
To further understand the mechanism of {\algname}'s active sample selection, we analyze the domain composition of its retrieved samples. In Figure~\ref{fig:visual}, each cell in row~$i$ and column~$j$ represents the proportion of samples from domain~$j$ within the customized support set when the new test sample (query) comes from domain~$i$. Hence, the diagonal elements indicate the frequency of retrieving samples from the same domain. We observe that on average, 40.9\% of the retrieved support samples come from the same domain as the query test sample, which is substantially higher than random selection (6.7\%). This indicates that {\algname}'s active sample selection effectively captures domain consistency, allowing the model to preferentially retrieve samples from similar domains and thereby perform more targeted adaptation.


\begin{figure*}[t]
    \centering
    \includegraphics[width=1.0\linewidth]{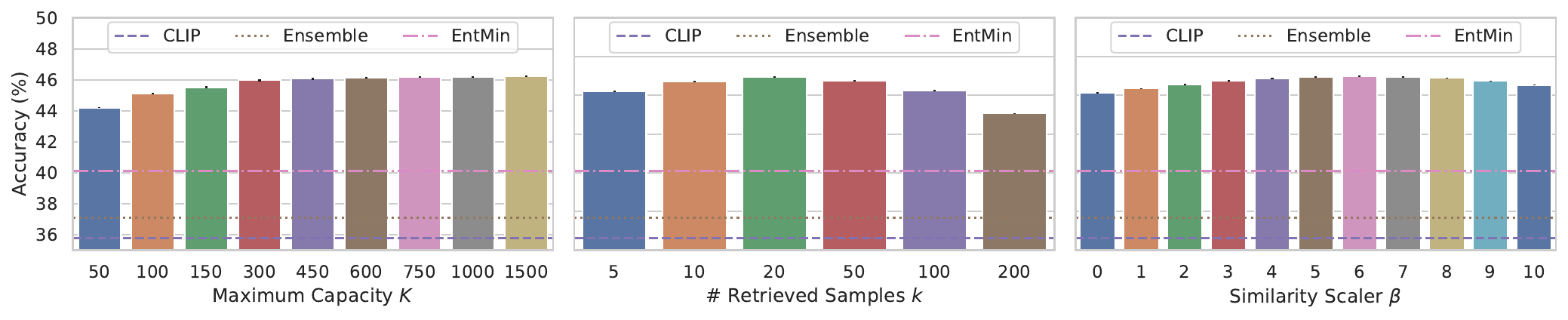}
    \caption{Hyperparameter sensitivity on CIFAR-100-C. (EntMin refers to entropy minimization without {\algname}.) }
    \label{fig:hp}
\end{figure*}

\begin{table}
    \centering
    \caption{Comparison of testing time on CIFAR-100-C.}
    \label{tab:time}
    \footnotesize
        \begin{tabular}{lrcc}
            \toprule
            Method & Testing Time & Acc. (\%) & Gain (\%) \\
            \midrule
            CLIP        & 5m27s     & 35.8  & - \\
            Tent        & 9m27s     & 41.2  & +5.4 \\
            NOTE        & 11m42s    & 41.4  & +5.6 \\
            SAR         & 15m50s    & 42.7  & +6.9 \\
            RoTTA       & 19m29s    & 41.1  & +5.3 \\
            TDA         & 7m59s     & 37.5  & +1.7 \\
            DMN-ZS      & 7m36s     & 37.5  & +1.7 \\
            WATT-S      & 18h54m50s & 42.2  & +6.4 \\
            CLIPArTT    & 3h05m36s  & 39.0  & +3.2 \\
            Mint        & 13m36s    & 40.1  & +4.3 \\
            \midrule
            {\algname}  & 14m08s    & 46.1  & +10.3 \\
            (w/o embed-grad cache)  & 115h42m18s & 46.1 & +10.3 \\
            \bottomrule 
        \end{tabular}
\end{table}

\paragraph{Efficiency (RQ3)}
We measure the testing time of {\algname} and baseline methods on the CIFAR-100-C dataset (150,000 images) to evaluate their efficiency. As shown in Table~\ref{tab:time}, the embedding-gradient cache in {\algname} substantially reduces the computational cost, achieving a 490$\times$ speedup compared to the naive implementation. As a result, the overall computation time of {\algname} is comparable to existing model-updating TTA methods, while delivering superior performance. We further analyze the \textit{GPU memory usage} of {\algname} in Appendix \ref{appendix:exp:memory}. 


\paragraph{Hyperparameter sensitivity}
In Figure~\ref{fig:hp}, we examine the sensitivity of {\algname} to its hyperparameters.
The maximum capacity $K$ controls the range of active sample selection: a larger $K$ provides a richer candidate pool and generally improves performance, though with diminishing returns once $K$ becomes sufficiently large.
The retrieval size $k$ determines how many samples are selected for adaptation.
As $k$ increases, the proportion of samples from different domains in the support set tends to grow, which weakens domain consistency; when $k$ is too small, the gradients are computed from too few samples and thus can be noisy. 
The similarity scaler $\beta$ plays a similar role to $k$. However, across a wide range of hyperparameter settings, {\algname} consistently improves TTA performance compared to vanilla entropy minimization (MinEnt), demonstrating its robustness to hyperparameter choices. 
Beyond the hyperparameters introduced by {\algname}, we also evaluate {\algname} under different learning rates and batch sizes, and observe consistent performance gains across all configurations. Appendix \ref{appendix:exp:hparams} provides full results of hyperpararameter sensitivity. 


\begin{figure}[t]
    \centering
    \includegraphics[width=0.85\linewidth]{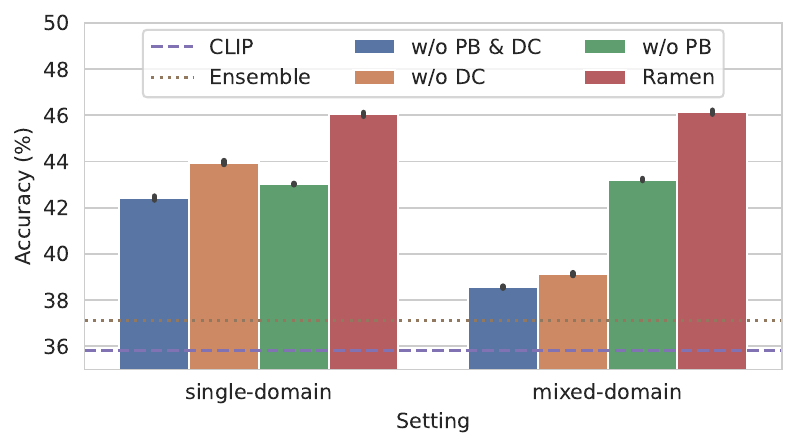}
    \caption{Ablation study. \textit{DC}: domain consistency, \textit{PB}: prediction balance. Both rules are beneficial to the performance of {\algname}.}
    \label{fig:ablation}
\end{figure}

\paragraph{Ablation study}
To further understand the effects of the two criteria in active sample selection, we conduct an ablation study with the following variants:
\begin{itemize}
    \item \textit{Without prediction balance (w/o PB)}: Instead of maintaining $C$ FIFO queues (one for each class) with capacity $K$ and retrieving the top-$k$ samples from each, we maintain a single FIFO queue without distinguishing classes, with capacity $C \cdot K$, and retrieve the top $C \cdot k$ samples. 
    \item \textit{Without domain consistency (w/o DC)}: We no longer select the top-$k$ most similar samples. Instead, $k$ samples are randomly chosen from the queue, and we set $\beta = 0$.
    \item \textit{Without both prediction balancing and domain consistency (w/o PB \& DC)}: This variant approximately corresponds to randomly selecting a subset of previously seen samples for model updates.
\end{itemize}
As shown in Figure~\ref{fig:ablation}, both the domain consistency and prediction balance criteria contribute to the overall performance of {\algname}.
Specifically, domain consistency makes {\algname} perform more consistently across both single-domain and mixed-domain settings, while prediction balance provides steady performance gains in both settings.
In Appendix~\ref{appendix:exp:ablation}, we further conduct an ablation study on the two components of the gradient aggregation weights: entropy and similarity. Results show that both are beneficial.

\section{Conclusion}
We present {\algname}, a framework for robust test-time adaptation under mixed-domain shifts. By actively selecting domain-consistent and prediction-balanced samples, it achieves stable and unbiased adaptation. With an efficient embedding-gradient cache, it enables fast updates without extra computation. Theoretical and empirical results demonstrate its robustness and efficiency.

\section*{Acknowledgement}
This work is supported by National Science Foundation under Award No. IIS-2416070, IIS-2117902. The views and conclusions are those of the authors and should not be interpreted as representing the official policies of the funding agencies or the government.

{
    \small
    \bibliographystyle{ieeenat_fullname}
    \bibliography{main}
}

\clearpage
\appendix
\onecolumn
{
\centering
\Large
\textbf{\thetitle}\\
\vspace{0.5em}Supplementary Material \\
\vspace{1.0em}
}
\addtocontents{toc}{\protect\setcounter{tocdepth}{2}}
\tableofcontents

\newpage
\section{Discussion and Details}
\label{appendix:discussion}

\subsection{More Related Works}
\label{appendix:discussion:related_work}

\paragraph{Differet variants of TTA settings}
\begin{itemize}
    \item \textit{Single-domain TTA} \cite{tent,matcha}: the most common TTA setting, where a pre-trained model is adapted to a single, consistent domain with a stable distribution. 
    \item \textit{Mixed-domain TTA} \cite{sar,sar2}: the main setting studied in this paper, where test samples from multiple domains are fully mixed, and the domain identity of each sample is unknown during testing. 
    \item \textit{Multi-target TTA} \cite{latte,atp}: assumes that a pre-trained model is adapted to a series of related but distinct domains, where domain relationships can be leveraged to improve performance. Unlike mixed-domain TTA, the domain label of each sample is known. 
    \item \textit{Continual TTA} \cite{cotta,rotta}: assumes that the model is adapted to a sequence of domains, simulating gradual distribution shifts over time. Although domain IDs are unavailable, domains exhibit temporal continuity—consecutive samples typically come from the same domain. 
    \item \textit{Noisy TTA} \cite{sotta}: assumes adaptation to a single domain, but the test data may contain noisy or out-of-distribution samples that are not drawn from the target distribution. This setting resembles mixed-domain TTA, but the noisy samples may not belong to any of the task classes, and their accuracy is not of interest.
\end{itemize}

\paragraph{Comparison to existing memory-based methods}
Existing memory-based approaches can be broadly categorized into two types: methods such as TDA \cite{tda} and DMN-ZS \cite{dmn} that directly store embeddings of previously seen test samples, and methods such as StatA \cite{stata} and BCA \cite{bca} that maintain statistics computed from these embeddings. {\algname} differs fundamentally from both categories in the following aspects.
\begin{itemize}
    \item \textit{What is cached.} Prior methods either store forward-pass embeddings or update summary statistics based on these embeddings. However, such designs do not support gradient-based model updates, and therefore cannot effectively address the mismatch between the visual encoder and the target domain, limiting their adaptation potential. In contrast, {\algname} caches both embeddings and gradients, enabling model updates and leading to greater adaptation gains.
    
    \item \textit{How the cache is used.} Existing methods typically apply a shared model to all test samples. For example, TDA uses the same cached embeddings regardless of the current input. Notably, while the cache itself may evolve over time, it is not conditioned on the specific test sample. In contrast, {\algname} performs active, per-sample retrieval to construct a support set tailored to each input, resulting in more robust and effective adaptation.
\end{itemize}


\subsection{Empirical Validation of Domain Consistency}
\label{appendix:discussion:empirical_validation}

In Subsection \ref{subsec:method:select}, we claimed that \textit{the more similar two image embeddings are, the more likely they originate from the same domain}. Here, we provide empirical validation. 

We conduct a study on CIFAR-100-C by computing pairwise embedding similarities, grouping sample pairs into 10 bins from high to low similarity (top 10\%, 10\%--20\%, $\ldots$, bottom 10\%), and measuring the fraction of same-domain pairs in each bin. Results are summarized in Table \ref{tab:empirical_validation}. 

We observe that higher similarity consistently corresponds to a higher same-domain ratio, supporting our claim.

\begin{table}[h!]
    \centering
    \caption{Empirical validation of embedding similarity for domain consistency. }
    \label{tab:empirical_validation}
    \footnotesize
    \begin{tabular}{lcccccccccc}
        \toprule
        Similarity bin (high $\rightarrow$ low) 
        & 1 & 2 & 3 & 4 & 5 & 6 & 7 & 8 & 9 & 10 \\
        \midrule
        Same-domain ratio (\%) 
        & 40.1 & 16.9 & 12.3 & 10.0 & 8.6 & 7.6 & 6.8 & 6.2 & 5.6 & 4.5 \\
        \bottomrule
    \end{tabular}
\end{table}


\newpage
\subsection{Reparameterization Trick}
\label{appendix:discussion:reparam}

In the main text, we describe our algorithm based on a single test sample, where computing the sample-level gradient is straightforward. However, when the batch size $B > 1$, performing forward and backward propagation for each sample individually has low parallelism and is thus inefficient. To address this, we adopt a reparameterization trick that enables more efficient computation of sample-level gradients.

Take LayerNorm \cite{layernorm} as an example, where the input has a shape of $L \times B \times D$, with $L$ being the sequence length, $B$ the batch size, and $D$ the hidden dimension. The affine parameters (weight and bias) normally have a dimension of $D$. In standard backpropagation, the gradient is averaged over the $B$ samples. To obtain per-sample gradients, we replicate the affine parameters $B$ times, resulting in a parameter dimension of $B \times D$, and change the reduction function in the entropy loss from mean to sum. During backpropagation, the resulting gradient also has a dimension of $B \times D$, corresponding directly to the gradient of each individual sample, since each sample’s forward pass is computed independently with its own replicated affine parameters. 

This approach is significantly more efficient than computing gradients one by one, because it leverages parallel computation in existing deep learning frameworks. Specifically, all samples share the same forward and backward computation graph, so their gradients can be computed in a single backward pass without serializing $B$ independent backward calls.

In addition, when the batch size is not excessively large, this reparameterization does not lead to a noticeable increase in memory usage. The majority of the computation graph remains unchanged, since only the LayerNorm affine parameters are duplicated, and these parameters account for a negligible fraction of the total model parameters (e.g., $<0.05\%$ on ViT-B/16). 
\newpage
\section{Missing Proofs}

\paragraph{Setup and assumptions}
Most TTA methods \cite{tent,note,sar,watt,clipartt,mint,panda,subtta} adapt models by updating the affine parameters in normalization layers. Therefore, we focus our analysis on this component. Common normalization layers \cite{batchnorm,groupnorm,layernorm,rmsnorm} share a same structure: a normalization step followed by an element-wise affine/linear transformation. Although the normalization step differs across layer types, the parameterization of the affine/linear transformation remains similar. Following \cite{mint}, we consider a single normalization layer in a binary classification setting. Let $\vv_i \in \R^d$ denote the intermediate normalized feature, obtained after the normalization step but before applying its affine transformation. The final image embedding is given by an element-wise linear transformation: 
\begin{align}
    \vz_i = \vv_i \odot \vw + \vb,
\end{align}
where $\odot$ denotes element-wise multiplication, and $\vw, \vb\in \R^d$ is the trainable parameter, initialized as $\vw = \vone, \vb = \vzero$. The text embeddings are given by $\vt_0, \vt_1 \in \R^d$. We omit the temperature parameter $\exp(t)$, as it can be absorbed into the text embeddings. 

We examine how the parameter $\vw = [w_1, \cdots, w_d]^\top$ changes after adaptation. Since scaling $\vw$ by a constant does not affect the direction of the image embedding $\vz_i$ or the prediction, we analyze the following normalized quantity, referred to as the \textit{feature importance}:
\begin{align}
    r_h = \frac{|w_h|}{\sum_{l=1}^d |w_l|}.
\end{align}
A larger $r_h$ indicates that the $h$-th feature contributes more to the prediction. Ideally, class-relevant features should have larger ratios, while domain-specific ones should have smaller ratios to ensure robustness to distribution shifts. Before adaptation, $\vw = \vone$, so all features have equal importance, i.e., $r_h = 1/d, \forall h$. The following Theorem \ref{thm:tta} analyzes how adaptation changes these feature importance.

\tta*  

\begin{proof}
    In the forward pass, for each image $j \in \sS$, its logits $\vl_j = [l_{j0}, l_{j1}]^\top$, predicted probabilities $\vp_j = [p_{j0}, p_{j1}]^\top$, and entropy $H_j$ are given by
    \begin{align*}
        l_{jc} &= \vz_j^\top \vt_c, \quad c \in \{0,1\}, \\
        p_{jc} &= \frac{\exp(l_{jc})}{\exp(l_{j0}) + \exp(l_{j1})}, \quad c \in \{0,1\}, \\
        H_j &= -\sum_{c \in \{0,1\}} p_{jc} \log p_{jc}. 
    \end{align*}
    And the total entropy on the support set $\sS$ is
    \begin{align*}
        H &= \E_{j \in \sS} H_i . 
    \end{align*}
    We then compute the back-propagation. For each sample $j$, 
    \begin{align*}
        \frac{\partial H_j}{\partial l_{j0}} 
        &= \frac{\partial H_j}{\partial p_{j0}} \cdot \frac{\partial p_{j0}}{\partial l_{j0}} 
        + \frac{\partial H_j}{\partial p_{j1}} \cdot \frac{\partial p_{j1}}{\partial l_{j0}} \\
        &= - (\log p_{j0} + 1) \cdot (p_{j0} p_{j1}) 
        - (\log p_{j1} + 1) \cdot (-p_{j0} p_{j1}) \\
        &= (\log p_{j1} - \log p_{j0}) \cdot (p_{j0} p_{j1}) \\
        &= (l_{j1} - l_{j0}) \cdot (p_{j0} p_{j1}). 
    \end{align*}
    By symmetry, 
    \begin{align*}
        \frac{\partial H_j}{\partial l_{j1}} 
        = (l_{j0} - l_{j1}) \cdot (p_{j0} p_{j1}).
    \end{align*}
    The gradient w.r.t. $\vz_j$ is
    \begin{align*}
        \frac{\partial H_j}{\partial \vz_j} 
        &= \frac{\partial H_j}{\partial l_{j0}} \cdot \frac{\partial l_{j0}}{\partial \vz_j} 
         + \frac{\partial H_j}{\partial l_{j1}} \cdot \frac{\partial l_{j1}}{\partial \vz_j} \\
        &= (l_{j1} - l_{j0}) \cdot (p_{j0} p_{j1}) \cdot \vt_0 
         + (l_{j0} - l_{j1}) \cdot (p_{j0} p_{j1}) \cdot \vt_1 \\
        &= - (l_{j1} - l_{j0}) \cdot (p_{j0} p_{j1}) \cdot (\vt_1 - \vt_0) 
    \end{align*}
    Therefore, the gradient w.r.t. $\vw$ is 
    \begin{align*}
        \frac{\partial H_j}{\partial \vw} 
        &= \frac{\partial \vz_j}{\partial \vw} \frac{\partial H_j}{\partial \vz_j} \\
        &= \frac{\partial H_j}{\partial \vz_j} \odot \vv_j \\
        &= - (l_{j1} - l_{j0}) \cdot (p_{j0} p_{j1}) \cdot (\vt_1 - \vt_0) \odot  \vv_j \\
        &= - (p_{j0} p_{j1}) \cdot (\vt_1 - \vt_0) \odot  \vv_j (\vz_j^\top \vt_1 - \vz_j^\top \vt_0) \\
        &= - (p_{j0} p_{j1}) \cdot (\vt_1 - \vt_0) \odot  \vv_j (\vv_j^\top \vt_1 - \vv_j^\top \vt_0) \tag{$\vw = \vone, \vb = \vzero$}\\
        &= - (p_{j0} p_{j1}) \cdot (\vt_1 - \vt_0) \odot  \vv_j \vv_j^\top (\vt_1 - \vt_0) \\
        &= - (p_{j0} p_{j1}) \cdot \diag(\vt_1 - \vt_0) \vv_j \vv_j^\top (\vt_1 - \vt_0) \\
        &= - \diag(\vt_1 - \vt_0) \left[ (p_{j0} p_{j1}) \vv_j \vv_j^\top \right](\vt_1 - \vt_0) \\
        \frac{\partial H}{\partial \vw} &= \E_{j \in \sS} \frac{\partial H_j}{\partial \vw} \\
        &= - \diag(\vt_1 - \vt_0) \E_{j \in \sS}\left[ (p_{j0} p_{j1}) \vv_j \vv_j^\top \right](\vt_1 - \vt_0) 
    \end{align*}
    Therefore, when we conduct gradient descent with learning rate $\eta$, 
    \begin{align*}
        \vw \leftarrow \vone + \eta \diag(\vt_1 - \vt_0) \E_{j \in \sS}\left[ (p_{j0} p_{j1}) \vv_j \vv_j^\top \right](\vt_1 - \vt_0). 
    \end{align*}
    Besides, the gradient w.r.t. $\vb$ is expected to be small: 
    \begin{align*}
        \frac{\partial H}{\partial \vb} = \E_{j \in \sS} \frac{\partial H_j}{\partial \vz_j} = - \left(\E_{j \in \sS} (l_{j1} - l_{j0}) \cdot (p_{j0} p_{j1}) \right) \cdot (\vt_1 - \vt_0). 
    \end{align*}
    When the retrieved samples are class-balanced, we have 
    \begin{align*}
        \E_{j \in \sS} (l_{j1} - l_{j0}) \cdot (p_{j0} p_{j1}) \approx 0,
    \end{align*}
    and thus the gradient w.r.t. bias becomes negligible.
\end{proof}
\newpage
\section{Additional Experiments}

\subsection{Experimental Setup} 
\label{appendix:exp:setup}

\paragraph{Compute resources}
All of our experiments are conducted on single NVIDIA Tesla V100 with 32GB memory, except for
experiments on large batch size are conducted on single NVIDIA Tesla A100 with 80GB memory. 

\paragraph{Datasets}
For image corruption benchmarks (CIFAR-10-C, CIFAR-100-C, ImageNet-C), we use the data provided in \cite{corruption}, following the implementation in \cite{vte}. For DomainNet \cite{domainnet}, we use the implementation provided by DomainBed \cite{domainbed}. 

\paragraph{Pre-trained models}
We use the pre-trained models provided in the original CLIP repository \cite{clip}. We use batch size $B=100$ for ViT-B/32 and ViT-B/16, and $B=50$ for ViT-L/14 to save GPU memory usage. 

\paragraph{Text embeddings}
For all methods, except from CLIPArTT \cite{clipartt} which modifies the prompts, we use the 7 template in \cite{tip-adaptor}: 
\begin{itemize}
    \item ``\texttt{itap of a \{class\}}''
    \item ``\texttt{a bad photo of the \{class\}}''
    \item ``\texttt{a origami \{class\}}''
    \item ``\texttt{a photo of the large \{class\}}''
    \item ``\texttt{a \{class\} in a video game}''  
    \item ``\texttt{art of the \{class\}}''
    \item ``\texttt{a photo of the small \{class\}}''
\end{itemize}
The text embedding for each class $y$ is computed by
\begin{align}
    \vt_y = \normalize\left( \sum_{\kappa = 1}^k \vt_{y, \kappa} \right), \text{where } \vt_{y, \kappa} = \normalize(\mathrm{text\_encoder(\{template_\kappa, classname_{\mathit{y}}\})})
\end{align}
As a reference, we report the zero-shot performance of CLIP using both a single template ``\texttt{a photo of a \{class\}.}'' and an ensemble of seven templates, denoted as CLIP and Ensemble, respectively.

\paragraph{Hyperparameters}
We rerun all baseline methods and perform separate hyperparameter searches for both the single-domain and mixed-domain settings, reporting the best configurations for each. This is particularly important for online TTA methods (Tent \cite{tent}, NOTE \cite{note}, SAR \cite{sar}), which suffer from batch dependency: their optimal learning rate is inversely correlated with the number of batches (i.e., update steps). Prior work \cite{sar} uses the same hyperparameters for both settings, which may underestimate the baselines’ performance under mixed-domain shifts. Our separate tuning ensures that all results are fairly compared and free of bias.The final hyperparameter choices are listed below. 
\begin{itemize}

    \item \textbf{Tent} \cite{tent}. On CIFAR-10-C/CIFAR-100-C/ImageNet-C/DomainNet (same order below), we use SignSGD optimizer, and lr = 2e-5/2e-5/1e-4/1e-6 (mixed-domain shift) or lr = 1e-4/2e-4/2e-3/1e-6 (single-domain shift). 
    
    \item \textbf{NOTE} \cite{note}. Memory size is set to be the same as batch size. We use SignSGD optimizer, and lr = 2e-5/2e-5/1e-4/2e-5 (mixed-domain shift) or lr = 1e-4/2e-4/2e-3/1e-4 (single-domain shift). 
    
    \item \textbf{SAR} \cite{sar}. We use SignSGD as the base optimizer of SAM, and lr = 2e-5/2e-5/1e-4/2e-5 (mixed-domain shift) or lr = 1e-4/2e-4/2e-3/1e-4 (single-domain shift). For all setting we use the same $E_0 = 0.4 \times \ln C$, where $C$ is the number of classes, $\rho = 0.05$, and $e_0 = 0.2$, following their original hyperparameters. 
    
    \item \textbf{RoTTA} \cite{rotta}. We use SignSGD optimizer, and lr = 1e-2/1e-2/1e-2/1e-5 (mixed-domain shift) or lr = 2e-2/2e-2/2e-2/1e-4 (single-domain shift). 

    \item \textbf{TDA} \cite{tda}. We use $\alpha$ = 1.0 for mixed-domain shift and $\alpha$ = 1.0/1.0/2.0/2.0 for single-domain shift. We use the default value for other hyperparameters. 

    \item \textbf{DMN} \cite{dmn}. We use $\alpha$ = 0.1 for mixed-domain shift and $\alpha$ = 0.2/1.0/1.0/0.2 for single-domain shift. 

    \item \textbf{WATT-S} \cite{watt}. We use Adam optimizer, lr = 1e-3, $L = 2$ and $M = 5$, provided by the original paper in the same datasets. 

    \item \textbf{CLIPArTT} \cite{clipartt}. We use Adam optimizer, lr = 1e-3, $K=3$ and Iter=10, provided by the original paper in the same datasets. 

    \item \textbf{Mint} \cite{mint}. We use Adam optimizer, lr = 5e-3/5e-3/1e-2/5e-3 (mixed-domain shift) or lr = 7e-3/7e-3/1.5e-2/5e-3 (single-domain shift), $K_{\text{prior}} = +\infty$ for mixed-domain shift and 10,000/10,000/10,000/100,000 for single-domain shifts. Single-domain hyperparameters follow those in the original paper. 

    \item \textbf{\algname}. We use the same set of hyperparameters for both mixed-domain and single-domain shifts to demonstrate the robustness of {\algname}. We set the maximum capacity $K$ =7,500/750/75/300 on CIFAR-10-C, CIFAR-100-C, ImageNet-C, and DomainNet, respectively. As a result, the total capacity $C\cdot K$ remains roughly consistent across datasets. We use $k$ =50/5/1/10, and $\beta$ = 5.0/5.0/0.0/5.0. The optimizer is SignSGD with learning rate 1e-2 of for all datasets.
    
\end{itemize}

\newpage
\subsection{Error Bars (RQ1)} 
\label{appendix:exp:error_bars}

Tables \ref{tab:corruption_meansd} and \ref{tab:domainnet_meansd} report the results in Tables \ref{tab:corruption} and \ref{tab:domainnet} with error bars, respectively. 

\renewcommand{\meansd}[2]{#1 \scriptsize(#2)}

\begin{table*}[h!]
    \centering
    \caption{Accuracy (mean (s.d.) \%) on corruption benchmarks under mixture of 15 corruptions. Best and second-best results are shown in \textbf{bold} and \underline{underlined}, respectively. }
    \label{tab:corruption_meansd}
    \resizebox{1.0\linewidth}{!}{
    \setlength{\tabcolsep}{3.0pt}{
        \begin{tabular}{llccccccccccccccc >{\columncolor{cvprblue!15}}c}
            \toprule
            & & \multicolumn{16}{c}{ViT-B/32 on CIFAR-10-C} \\
            \cmidrule(lr){3-18}
            Method & Venue & \multicolumn{3}{c}{Noise} & \multicolumn{4}{c}{Blur} & \multicolumn{4}{c}{Weather} & \multicolumn{4}{c}{Digital} & \multirow{2.5}{*}{Avg.} \cellcolor{white} \\
            \cmidrule(lr){3-5} \cmidrule(lr){6-9} \cmidrule(lr){10-13} \cmidrule(lr){14-17}
            & & Gauss. & Shot & Impul. 
            & Defoc. & Glass & Motion & Zoom  
            & Snow & Frost & Fog & Brit. 
            & Contr. & Elastic & Pixel & JPEG \\
            \midrule
            CLIP \cite{clip} & ICML'21 &
                35.5 & 40.0 & 43.2 & 70.0 & 41.4 & 64.5 & 70.2 & 70.8 & 72.3 & 66.7 & 81.4 & 64.5 & 59.6 & 48.2 & 56.7 & 59.0 \\
            Ensemble & - &
                38.6 & 42.6 & 42.7 & 72.4 & 43.9 & 66.6 & 71.6 & 73.8 & 75.7 & 69.0 & 83.6 & 67.0 & 61.9 & 51.8 & 58.6 & 61.3 \\
            Tent \cite{tent} & ICLR'21 & 
                \meansd{39.5}{0.6} & \meansd{43.1}{0.4} & \meansd{46.4}{0.3} & \meansd{\underline{77.8}}{0.3} & \meansd{56.6}{0.3} & \meansd{72.7}{0.4} & \meansd{77.9}{0.4} & \meansd{\underline{79.7}}{0.2} & \meansd{79.9}{0.2} & \meansd{74.0}{0.3} & \meansd{86.9}{0.1} & \meansd{75.5}{0.1} & \meansd{69.7}{0.3} & \meansd{60.3}{0.1} & \meansd{63.9}{0.4} & \meansd{66.9}{0.2} \\
            NOTE \cite{note} & NeurIPS'22 & 
                \meansd{44.9}{4.4} & \meansd{48.3}{3.9} & \meansd{48.9}{1.7} & \meansd{77.1}{0.5} & \meansd{57.0}{0.7} & \meansd{74.0}{1.4} & \meansd{77.6}{0.7} & \meansd{78.0}{0.8} & \meansd{79.2}{0.9} & \meansd{73.0}{1.4} & \meansd{86.5}{0.5} & \meansd{73.5}{1.8} & \meansd{68.7}{1.1} & \meansd{57.9}{1.5} & \meansd{62.2}{0.7} & \meansd{67.1}{0.8} \\
            SAR \cite{sar} & ICLR'23 & 
                \meansd{48.8}{0.7} & \meansd{51.8}{0.6} & \meansd{51.0}{0.5} & \meansd{77.5}{0.2} & \meansd{60.5}{0.2} & \meansd{74.8}{0.5} & \meansd{\textbf{79.2}}{0.3} & \meansd{79.6}{0.2} & \meansd{\textbf{80.7}}{0.4} & \meansd{76.6}{0.4} & \meansd{\underline{87.4}}{0.1} & \meansd{77.0}{0.2} & \meansd{71.0}{0.3} & \meansd{57.5}{0.4} & \meansd{62.8}{0.3} & \meansd{69.1}{0.2} \\
            RoTTA \cite{rotta} & CVPR'23 & 
                \meansd{\underline{53.1}}{1.3} & \meansd{\underline{56.4}}{1.4} & \meansd{\underline{51.3}}{0.9} & \meansd{\textbf{78.1}}{0.5} & \meansd{\underline{61.0}}{1.0} & \meansd{75.2}{0.2} & \meansd{\underline{79.0}}{0.4} & \meansd{79.0}{0.6} & \meansd{80.1}{1.0} & \meansd{\textbf{80.0}}{0.4} & \meansd{85.6}{0.5} & \meansd{\textbf{82.9}}{0.4} & \meansd{\textbf{73.7}}{0.9} & \meansd{\underline{65.3}}{0.9} & \meansd{\textbf{69.9}}{0.7} & \meansd{\underline{71.4}}{0.4} \\
            TDA \cite{tda} & CVPR'24 & 
                \meansd{39.8}{0.4} & \meansd{42.8}{0.4} & \meansd{43.4}{0.3} & \meansd{73.5}{0.3} & \meansd{45.4}{0.2} & \meansd{67.6}{0.2} & \meansd{73.2}{0.2} & \meansd{74.3}{0.2} & \meansd{76.4}{0.3} & \meansd{69.8}{0.2} & \meansd{84.0}{0.2} & \meansd{66.6}{0.2} & \meansd{62.5}{0.1} & \meansd{52.1}{0.3} & \meansd{57.7}{0.4} & \meansd{61.9}{0.1} \\
            DMN-ZS \cite{dmn} & CVPR'24 & 
                \meansd{38.4}{0.1} & \meansd{41.7}{0.1} & \meansd{42.4}{0.1} & \meansd{73.1}{0.0} & \meansd{44.6}{0.0} & \meansd{67.2}{0.1} & \meansd{72.8}{0.1} & \meansd{74.6}{0.1} & \meansd{76.3}{0.0} & \meansd{69.5}{0.0} & \meansd{84.0}{0.0} & \meansd{66.5}{0.0} & \meansd{62.3}{0.0} & \meansd{52.4}{0.1} & \meansd{58.3}{0.1} & \meansd{61.6}{0.0} \\
            WATT-S \cite{watt} & NeurIPS'24 & 
                \meansd{47.4}{0.3} & \meansd{49.7}{0.1} & \meansd{49.5}{0.3} & \meansd{77.0}{0.2} & \meansd{54.3}{0.2} & \meansd{72.9}{0.2} & \meansd{77.3}{0.2} & \meansd{77.8}{0.2} & \meansd{79.5}{0.1} & \meansd{74.8}{0.2} & \meansd{86.4}{0.1} & \meansd{74.1}{0.2} & \meansd{67.8}{0.2} & \meansd{57.2}{0.4} & \meansd{62.9}{0.2} & \meansd{67.2}{0.0} \\
            CLIPArTT \cite{clipartt} & WACV'25 & 
                \meansd{37.2}{0.2} & \meansd{41.5}{0.2} & \meansd{44.8}{0.2} & \meansd{70.5}{0.1} & \meansd{48.9}{0.2} & \meansd{65.5}{0.2} & \meansd{70.7}{0.3} & \meansd{72.9}{0.3} & \meansd{75.3}{0.1} & \meansd{67.4}{0.2} & \meansd{82.8}{0.3} & \meansd{66.9}{0.1} & \meansd{61.9}{0.3} & \meansd{58.5}{0.2} & \meansd{59.5}{0.3} & \meansd{61.6}{0.1} \\
            Mint \cite{mint} & NeurIPS'25 & 
                \meansd{47.9}{0.1} & \meansd{50.9}{0.1} & \meansd{50.3}{0.1} & \meansd{74.1}{0.0} & \meansd{51.3}{0.1} & \meansd{\underline{75.8}}{0.0} & \meansd{76.9}{0.0} & \meansd{76.4}{0.1} & \meansd{77.0}{0.1} & \meansd{73.8}{0.1} & \meansd{85.8}{0.1} & \meansd{73.8}{0.0} & \meansd{65.8}{0.1} & \meansd{47.3}{0.0} & \meansd{56.4}{0.1} & \meansd{65.6}{0.0} \\
            {\algname} & - & 
                \meansd{\textbf{61.0}}{0.4} & \meansd{\textbf{63.3}}{0.3} & \meansd{\textbf{56.0}}{0.2} & \meansd{77.2}{0.1} & \meansd{\textbf{64.1}}{0.2} & \meansd{\textbf{77.2}}{0.1} & \meansd{78.9}{0.3} & \meansd{\textbf{80.9}}{0.1} & \meansd{\underline{80.3}}{0.2} & \meansd{\underline{78.0}}{0.2} & \meansd{\textbf{88.1}}{0.2} & \meansd{\underline{79.8}}{0.2} & \meansd{\underline{72.0}}{0.1} & \meansd{\textbf{65.5}}{0.1} & \meansd{\underline{67.5}}{0.2} & \meansd{\textbf{72.7}}{0.1} \\
            \midrule 
            & & \multicolumn{16}{c}{ViT-B/16 on CIFAR-100-C} \\
            \cmidrule(lr){3-18}
            Method & Venue & \multicolumn{3}{c}{Noise} & \multicolumn{4}{c}{Blur} & \multicolumn{4}{c}{Weather} & \multicolumn{4}{c}{Digital} & \multirow{2.5}{*}{Avg.} \cellcolor{white}\\
            \cmidrule(lr){3-5} \cmidrule(lr){6-9} \cmidrule(lr){10-13} \cmidrule(lr){14-17}
            & & Gauss. & Shot & Impul. 
            & Defoc. & Glass & Motion & Zoom  
            & Snow & Frost & Fog & Brit. 
            & Contr. & Elastic & Pixel & JPEG \\
            \midrule
            CLIP \cite{clip} & ICML'21 &
                19.7 & 21.4 & 25.3 & 42.5 & 20.2 & 43.1 & 48.0 & 48.4 & 49.7 & 41.7 & 57.0 & 34.5 & 29.2 & 23.9 & 32.4 & 35.8 \\
            Ensemble & - &
                22.9 & 24.4 & 29.6 & 43.6 & 20.1 & 43.7 & 48.7 & 48.9 & 50.4 & 41.8 & 58.1 & 35.3 & 29.2 & 26.3 & 33.6 & 37.1 \\
            Tent \cite{tent} & ICLR'21 & 
                \meansd{24.9}{0.2} & \meansd{26.7}{0.2} & \meansd{34.4}{0.1} & \meansd{49.7}{0.1} & \meansd{23.9}{0.2} & \meansd{47.5}{0.1} & \meansd{53.9}{0.1} & \meansd{52.9}{0.1} & \meansd{51.4}{0.2} & \meansd{45.3}{0.2} & \meansd{62.9}{0.2} & \meansd{43.6}{0.1} & \meansd{32.0}{0.1} & \meansd{\underline{31.7}}{0.1} & \meansd{37.1}{0.1} & \meansd{41.2}{0.0} \\
            NOTE \cite{note} & NeurIPS'22 & 
                \meansd{25.7}{1.4} & \meansd{27.3}{1.8} & \meansd{35.4}{1.2} & \meansd{49.9}{0.7} & \meansd{23.9}{1.2} & \meansd{47.6}{0.7} & \meansd{54.2}{1.1} & \meansd{52.7}{0.7} & \meansd{51.8}{1.2} & \meansd{45.9}{0.5} & \meansd{63.7}{0.9} & \meansd{44.0}{1.1} & \meansd{32.4}{1.4} & \meansd{30.2}{2.1} & \meansd{36.0}{0.8} & \meansd{41.4}{0.8} \\
            SAR \cite{sar} & ICLR'23 & 
                \meansd{\underline{28.4}}{0.2} & \meansd{\underline{30.5}}{0.3} & \meansd{\underline{38.5}}{0.2} & \meansd{\underline{50.4}}{0.2} & \meansd{24.6}{0.4} & \meansd{\underline{49.2}}{0.2} & \meansd{\underline{55.1}}{0.2} & \meansd{54.1}{0.2} & \meansd{53.4}{0.2} & \meansd{47.2}{0.2} & \meansd{\underline{64.0}}{0.3} & \meansd{45.0}{0.2} & \meansd{33.6}{0.1} & \meansd{29.7}{0.2} & \meansd{37.4}{0.2} & \meansd{\underline{42.7}}{0.1} \\
            RoTTA \cite{rotta} & CVPR'23 & 
                \meansd{22.9}{0.6} & \meansd{24.5}{0.8} & \meansd{32.4}{1.0} & \meansd{47.9}{1.2} & \meansd{\underline{25.4}}{1.2} & \meansd{46.5}{1.2} & \meansd{50.1}{0.9} & \meansd{50.5}{1.4} & \meansd{50.7}{1.1} & \meansd{\underline{50.8}}{0.7} & \meansd{58.3}{1.0} & \meansd{\textbf{53.1}}{0.7} & \meansd{\textbf{37.7}}{0.8} & \meansd{29.9}{1.0} & \meansd{36.2}{0.8} & \meansd{41.1}{0.8} \\
            TDA \cite{tda} & CVPR'24 & 
                \meansd{23.4}{0.1} & \meansd{25.2}{0.2} & \meansd{30.3}{0.2} & \meansd{44.1}{0.2} & \meansd{20.3}{0.2} & \meansd{43.8}{0.1} & \meansd{49.1}{0.2} & \meansd{49.3}{0.2} & \meansd{51.3}{0.1} & \meansd{42.2}{0.1} & \meansd{58.7}{0.1} & \meansd{36.1}{0.1} & \meansd{29.3}{0.1} & \meansd{26.4}{0.2} & \meansd{33.6}{0.1} & \meansd{37.5}{0.0} \\
            DMN-ZS \cite{dmn} & CVPR'24 & 
                \meansd{22.9}{0.1} & \meansd{24.4}{0.2} & \meansd{29.6}{0.1} & \meansd{44.3}{0.1} & \meansd{20.2}{0.1} & \meansd{44.1}{0.0} & \meansd{49.4}{0.1} & \meansd{49.7}{0.1} & \meansd{51.1}{0.1} & \meansd{42.2}{0.1} & \meansd{58.8}{0.1} & \meansd{35.5}{0.1} & \meansd{29.4}{0.0} & \meansd{26.2}{0.1} & \meansd{34.0}{0.1} & \meansd{37.5}{0.0} \\
            WATT-S \cite{watt} & NeurIPS'24 & 
                \meansd{26.4}{0.2} & \meansd{28.4}{0.3} & \meansd{35.6}{0.3} & \meansd{50.2}{0.3} & \meansd{24.2}{0.3} & \meansd{48.5}{0.2} & \meansd{54.4}{0.2} & \meansd{\underline{54.2}}{0.3} & \meansd{\underline{54.0}}{0.2} & \meansd{47.7}{0.3} & \meansd{63.5}{0.4} & \meansd{43.3}{0.2} & \meansd{34.0}{0.2} & \meansd{31.4}{0.2} & \meansd{\underline{37.7}}{0.4} & \meansd{42.2}{0.1} \\
            CLIPArTT \cite{clipartt} & WACV'25 & 
                \meansd{21.1}{0.3} & \meansd{22.8}{0.2} & \meansd{29.4}{0.2} & \meansd{46.3}{0.1} & \meansd{24.5}{0.3} & \meansd{45.0}{0.1} & \meansd{51.1}{0.1} & \meansd{51.5}{0.1} & \meansd{51.8}{0.2} & \meansd{44.0}{0.3} & \meansd{61.1}{0.2} & \meansd{39.0}{0.2} & \meansd{33.2}{0.2} & \meansd{28.3}{0.3} & \meansd{36.7}{0.1} & \meansd{39.0}{0.1} \\
            Mint \cite{mint} & NeurIPS'25 & 
                \meansd{22.3}{0.0} & \meansd{24.5}{0.1} & \meansd{33.0}{0.1} & \meansd{49.2}{0.1} & \meansd{22.4}{0.1} & \meansd{47.8}{0.1} & \meansd{54.2}{0.1} & \meansd{52.4}{0.0} & \meansd{51.6}{0.1} & \meansd{47.7}{0.1} & \meansd{63.7}{0.1} & \meansd{42.2}{0.1} & \meansd{31.8}{0.0} & \meansd{24.0}{0.1} & \meansd{34.3}{0.0} & \meansd{40.1}{0.0} \\
            {\algname} & - & 
                \meansd{\textbf{30.3}}{0.3} & \meansd{\textbf{32.9}}{0.4} & \meansd{\textbf{42.8}}{0.3} & \meansd{\textbf{52.5}}{0.3} & \meansd{\textbf{29.1}}{0.4} & \meansd{\textbf{52.2}}{0.2} & \meansd{\textbf{55.5}}{0.1} & \meansd{\textbf{55.4}}{0.1} & \meansd{\textbf{55.1}}{0.2} & \meansd{\textbf{52.6}}{0.2} & \meansd{\textbf{65.9}}{0.2} & \meansd{\underline{51.0}}{0.2} & \meansd{\underline{36.8}}{0.2} & \meansd{\textbf{40.4}}{0.4} & \meansd{\textbf{39.6}}{0.2} & \meansd{\textbf{46.1}}{0.1} \\
            \midrule 
            & & \multicolumn{16}{c}{ViT-L/14 on ImageNet-C} \\
            \cmidrule(lr){3-18}
            Method & Venue & \multicolumn{3}{c}{Noise} & \multicolumn{4}{c}{Blur} & \multicolumn{4}{c}{Weather} & \multicolumn{4}{c}{Digital} & \multirow{2.5}{*}{Avg.} \cellcolor{white}\\
            \cmidrule(lr){3-5} \cmidrule(lr){6-9} \cmidrule(lr){10-13} \cmidrule(lr){14-17}
            & & Gauss. & Shot & Impul. 
            & Defoc. & Glass & Motion & Zoom  
            & Snow & Frost & Fog & Brit. 
            & Contr. & Elastic & Pixel & JPEG \\
            \midrule
            CLIP \cite{clip} & ICML'21 &
                27.4 & 29.4 & 28.7 & 34.6 & 25.3 & 41.0 & 36.7 & 49.8 & 44.1 & 49.7 & 65.4 & 35.1 & 30.3 & 53.5 & 42.2 & 39.6 \\
            Ensemble & - &
                29.1 & 30.4 & 30.1 & 37.5 & 27.3 & 44.2 & 39.2 & 52.4 & 46.4 & 52.6 & 67.8 & 34.4 & 32.4 & 56.2 & 44.2 & 41.6 \\
            Tent \cite{tent} & ICLR'21 & 
                \meansd{36.3}{0.4} & \meansd{38.0}{0.1} & \meansd{38.5}{0.1} & \meansd{\underline{40.3}}{0.3} & \meansd{37.4}{0.5} & \meansd{\underline{47.7}}{0.3} & \meansd{\underline{43.5}}{0.4} & \meansd{54.2}{0.4} & \meansd{49.5}{0.2} & \meansd{56.4}{0.3} & \meansd{67.8}{0.2} & \meansd{45.4}{0.7} & \meansd{40.6}{0.5} & \meansd{57.6}{0.3} & \meansd{46.6}{0.3} & \meansd{\underline{46.7}}{0.1} \\
            NOTE \cite{note} & NeurIPS'22 & 
                \meansd{36.3}{0.4} & \meansd{38.0}{0.2} & \meansd{38.6}{0.3} & \meansd{40.0}{0.4} & \meansd{37.3}{0.4} & \meansd{47.6}{0.3} & \meansd{\underline{43.5}}{0.3} & \meansd{\underline{54.4}}{0.4} & \meansd{49.7}{0.5} & \meansd{56.4}{0.5} & \meansd{67.8}{0.3} & \meansd{44.5}{0.4} & \meansd{40.8}{0.4} & \meansd{57.5}{0.3} & \meansd{46.2}{0.5} & \meansd{46.6}{0.1} \\
            SAR \cite{sar} & ICLR'23 & 
                \meansd{36.3}{0.8} & \meansd{38.3}{0.4} & \meansd{38.5}{0.4} & \meansd{39.2}{0.7} & \meansd{\underline{38.5}}{0.8} & \meansd{46.8}{0.5} & \meansd{43.4}{0.6} & \meansd{53.9}{0.3} & \meansd{\underline{50.4}}{0.5} & \meansd{\underline{56.6}}{0.4} & \meansd{66.8}{0.2} & \meansd{\underline{45.5}}{0.1} & \meansd{\underline{41.8}}{0.8} & \meansd{56.2}{0.6} & \meansd{46.6}{0.5} & \meansd{46.6}{0.3} \\
            RoTTA \cite{rotta} & CVPR'23 & 
                \meansd{\underline{37.4}}{0.5} & \meansd{\underline{38.7}}{0.4} & \meansd{\underline{39.6}}{0.7} & \meansd{36.9}{0.5} & \meansd{34.5}{0.5} & \meansd{44.0}{0.5} & \meansd{39.3}{0.7} & \meansd{53.5}{0.6} & \meansd{48.6}{0.6} & \meansd{54.2}{0.7} & \meansd{67.2}{0.3} & \meansd{43.0}{0.8} & \meansd{39.7}{0.6} & \meansd{56.3}{0.5} & \meansd{\underline{48.4}}{0.6} & \meansd{45.4}{0.4} \\
            TDA \cite{tda} & CVPR'24 & 
                \meansd{29.6}{0.1} & \meansd{31.0}{0.1} & \meansd{31.5}{0.2} & \meansd{38.1}{0.1} & \meansd{28.9}{0.1} & \meansd{44.6}{0.1} & \meansd{40.0}{0.1} & \meansd{53.4}{0.1} & \meansd{47.5}{0.1} & \meansd{53.3}{0.2} & \meansd{\underline{68.5}}{0.1} & \meansd{39.1}{0.2} & \meansd{33.4}{0.2} & \meansd{57.1}{0.2} & \meansd{45.1}{0.2} & \meansd{42.7}{0.0} \\
            DMN-ZS \cite{dmn} & CVPR'24 & 
                \meansd{29.2}{0.1} & \meansd{30.5}{0.1} & \meansd{30.2}{0.0} & \meansd{37.6}{0.1} & \meansd{27.6}{0.1} & \meansd{44.4}{0.1} & \meansd{39.2}{0.1} & \meansd{52.5}{0.0} & \meansd{46.6}{0.1} & \meansd{52.7}{0.0} & \meansd{67.9}{0.0} & \meansd{33.8}{0.1} & \meansd{32.5}{0.0} & \meansd{56.5}{0.1} & \meansd{44.5}{0.1} & \meansd{41.7}{0.0} \\
            WATT-S \cite{watt} & NeurIPS'24 & 
                \meansd{31.8}{0.1} & \meansd{33.1}{0.1} & \meansd{33.6}{0.2} & \meansd{39.2}{0.2} & \meansd{30.7}{0.3} & \meansd{45.7}{0.1} & \meansd{41.3}{0.1} & \meansd{53.5}{0.2} & \meansd{47.3}{0.3} & \meansd{53.7}{0.2} & \meansd{67.9}{0.3} & \meansd{40.1}{0.1} & \meansd{34.7}{0.3} & \meansd{57.3}{0.1} & \meansd{45.6}{0.3} & \meansd{43.7}{0.0} \\
            CLIPArTT \cite{clipartt} & WACV'25 &
                \meansd{28.2}{0.3} & \meansd{30.2}{0.3} & \meansd{29.8}{0.2} & \meansd{35.0}{0.2} & \meansd{26.5}{0.3} & \meansd{41.4}{0.2} & \meansd{37.3}{0.1} & \meansd{50.3}{0.3} & \meansd{43.7}{0.3} & \meansd{49.7}{0.2} & \meansd{65.0}{0.2} & \meansd{38.5}{0.2} & \meansd{31.0}{0.3} & \meansd{53.5}{0.2} & \meansd{42.3}{0.3} & \meansd{40.2}{0.1} \\
            Mint \cite{mint} & NeurIPS'25 & 
                \meansd{33.8}{0.2} & \meansd{35.2}{0.2} & \meansd{35.8}{0.2} & \meansd{39.3}{0.2} & \meansd{33.3}{0.3} & \meansd{46.2}{0.2} & \meansd{41.5}{0.2} & \meansd{53.6}{0.3} & \meansd{47.4}{0.2} & \meansd{53.3}{0.1} & \meansd{67.7}{0.1} & \meansd{36.3}{0.5} & \meansd{38.2}{0.3} & \meansd{\underline{58.1}}{0.1} & \meansd{47.4}{0.7} & \meansd{44.5}{0.1} \\
            {\algname} & - & 
                \meansd{\textbf{37.5}}{0.2} & \meansd{\textbf{38.8}}{0.1} & \meansd{\textbf{41.1}}{0.2} & \meansd{\textbf{42.2}}{0.3} & \meansd{\textbf{39.6}}{0.4} & \meansd{\textbf{49.9}}{0.4} & \meansd{\textbf{46.2}}{0.4} & \meansd{\textbf{57.3}}{0.3} & \meansd{\textbf{51.6}}{0.1} & \meansd{\textbf{59.4}}{0.1} & \meansd{\textbf{68.7}}{0.2} & \meansd{\textbf{46.4}}{0.4} & \meansd{\textbf{45.2}}{0.2} & \meansd{\textbf{59.4}}{0.2} & \meansd{\textbf{54.7}}{0.2} & \meansd{\textbf{49.2}}{0.1} \\
            \bottomrule 
        \end{tabular}
    }
    }
\end{table*}
\begin{table}[h!]
    \centering
    \caption{Mean accuracy (\%) on DomainNet under mixture of six domains. Best and second-best results are shown in \textbf{bold} and \underline{underlined}, respectively. }
    \label{tab:domainnet_meansd}
    \resizebox{0.55\linewidth}{!}{
    \setlength{\tabcolsep}{3pt}{
        \begin{tabular}{llcccccc >{\columncolor{cvprblue!15}}c}
            \toprule
            \multirow{2.5}{*}{Method} & \multirow{2.5}{*}{Venue} & \multicolumn{7}{c}{ViT-B/32 on DomainNet} \\
            \cmidrule(lr){3-9}
            & & clip & info & paint & quick & real & sketch & Avg. \cellcolor{white} \\
            \midrule
            CLIP \cite{clip} & ICML'21 & 
                67.6 & 41.5 & 62.7 & 12.8 & 81.0 & 58.1 & 54.0 \\
            Ensemble & - & 
                68.9 & 44.2 & 64.7 & 13.3 & 82.3 & 60.3 & 55.6 \\
            Tent \cite{tent} & ICLR'21 &
                \meansd{69.1}{0.0} & \meansd{44.3}{0.0} & \meansd{64.9}{0.0} & \meansd{13.0}{0.0} & \meansd{82.3}{0.0} & \meansd{60.3}{0.0} & \meansd{55.7}{0.0} \\
            NOTE \cite{note} & NeurIPS'22 &
                \meansd{\underline{69.6}}{0.1} & \meansd{\underline{45.1}}{0.0} & \meansd{65.0}{0.0} & \meansd{16.4}{0.1} & \meansd{82.1}{0.0} & \meansd{\underline{60.8}}{0.0} & \meansd{\underline{56.5}}{0.0} \\
            SAR \cite{sar} & ICLR'23 & 
                \meansd{69.4}{0.0} & \meansd{44.7}{0.0} & \meansd{65.2}{0.1} & \meansd{14.5}{0.0} & \meansd{\underline{82.5}}{0.0} & \meansd{\underline{60.8}}{0.0} & \meansd{56.2}{0.0} \\
            RoTTA \cite{rotta} & CVPR'23 & 
                \meansd{69.4}{0.1} & \meansd{44.8}{0.0} & \meansd{\underline{65.3}}{0.0} & \meansd{14.3}{0.0} & \meansd{82.4}{0.0} & \meansd{\underline{60.8}}{0.0} & \meansd{56.2}{0.0} \\
            TDA \cite{tda} & CVPR'24 & 
                \meansd{68.9}{0.0} & \meansd{44.7}{0.0} & \meansd{65.1}{0.1} & \meansd{14.6}{0.1} & \meansd{82.3}{0.0} & \meansd{60.5}{0.0} & \meansd{56.0}{0.0} \\
            DMN-ZS \cite{dmn} & CVPR'24 & 
                \meansd{68.9}{0.0} & \meansd{44.6}{0.0} & \meansd{64.8}{0.0} & \meansd{12.7}{0.0} & \meansd{\textbf{82.6}}{0.0} & \meansd{60.2}{0.0} & \meansd{55.6}{0.0} \\
            WATT-S \cite{watt} & NeurIPS'24 & 
                \meansd{68.3}{0.0} & \meansd{42.8}{0.1} & \meansd{63.5}{0.1} & \meansd{\underline{17.0}}{0.0} & \meansd{81.4}{0.0} & \meansd{59.9}{0.1} & \meansd{55.5}{0.0} \\
            CLIPArTT \cite{clipartt} & WACV'25 & 
                \meansd{67.2}{0.0} & \meansd{41.9}{0.0} & \meansd{62.9}{0.0} & \meansd{12.6}{0.0} & \meansd{80.8}{0.0} & \meansd{57.8}{0.1} & \meansd{53.9}{0.0} \\
            Mint \cite{mint} & NeurIPS'25 & 
                \meansd{69.3}{0.0} & \meansd{44.4}{0.0} & \meansd{65.0}{0.0} & \meansd{15.0}{0.0} & \meansd{82.3}{0.0} & \meansd{60.5}{0.0} & \meansd{56.1}{0.0} \\
            {\algname} & - & 
                \meansd{\textbf{70.1}}{0.0} & \meansd{\textbf{45.2}}{0.0} & \meansd{\textbf{65.8}}{0.0} & \meansd{\textbf{17.6}}{0.1} & \meansd{82.4}{0.0} & \meansd{\textbf{61.5}}{0.0} & \meansd{\textbf{57.1}}{0.0} \\
            \bottomrule 
        \end{tabular}
    }
    }
\end{table}
\newpage
\subsection{Comparison of Single-Domain and Mixed-Domain Shifts (RQ1)} 
\label{appendix:exp:single_vs_mixed}

We compare the performance of all methods across both single-domain and mixed-domain settings on all datasets.
In the single-domain evaluation, models are tested on each domain separately without mixing test samples, and we report the average accuracy across all domains.
For all baselines, especially online adaptation methods, we tune the learning rate separately for the two settings, since their performance is highly sensitive to the product of update steps and learning rate.
In contrast, {\algname} uses the same hyperparameters across both settings.

We observe that most existing methods experience a noticeable performance drop under mixed-domain shifts compared to the single-domain case, with a few exceptions.
\begin{itemize}
    \item RoTTA, designed for continual TTA, shows limited degradation under domain mixing.
    \item On DomainNet, since different domains have distinct label distributions, mixing them unexpectedly reduces prediction bias for certain methods such as Tent and SAR. As a validation, NOTE, which already includes a prediction-balanced memory, still suffers under domain mixtures.
\end{itemize}
Across all four datasets, {\algname} maintains consistently strong performance in both single-domain and mixed-domain settings.

\begin{figure}[h!]
    \centering
    \begin{subfigure}[t]{0.49\linewidth}
        \centering
        \includegraphics[width=\linewidth]{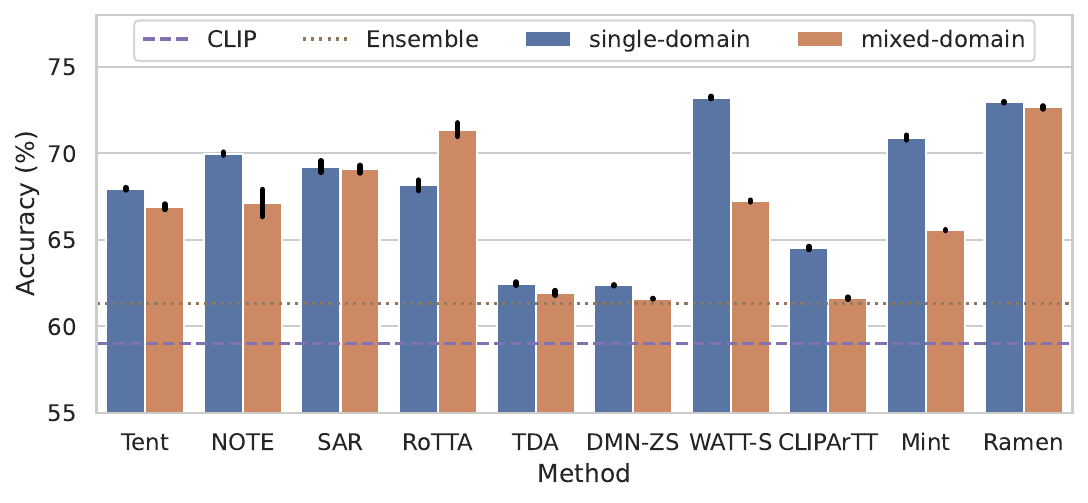}
        \subcaption{CIFAR-10-C}
    \end{subfigure}
    \hfill
    \begin{subfigure}[t]{0.49\linewidth}
        \centering
        \includegraphics[width=\linewidth]{figure/cifar100_compare.pdf}
        \subcaption{CIFAR-100-C}
    \end{subfigure}
    
    \begin{subfigure}[t]{0.49\linewidth}
        \centering
        \includegraphics[width=\linewidth]{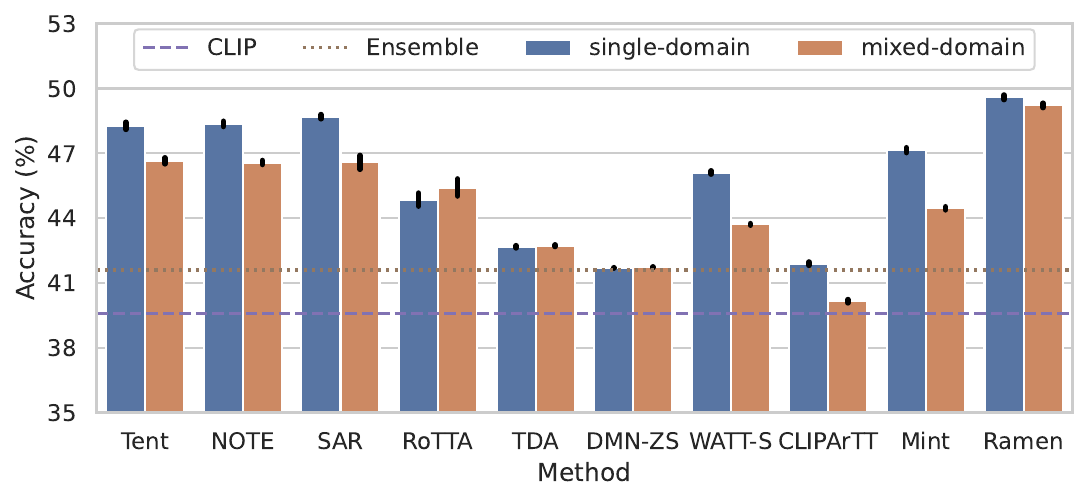}
        \subcaption{ImageNet-C}
    \end{subfigure}
    \hfill
    \begin{subfigure}[t]{0.49\linewidth}
        \centering
        \includegraphics[width=\linewidth]{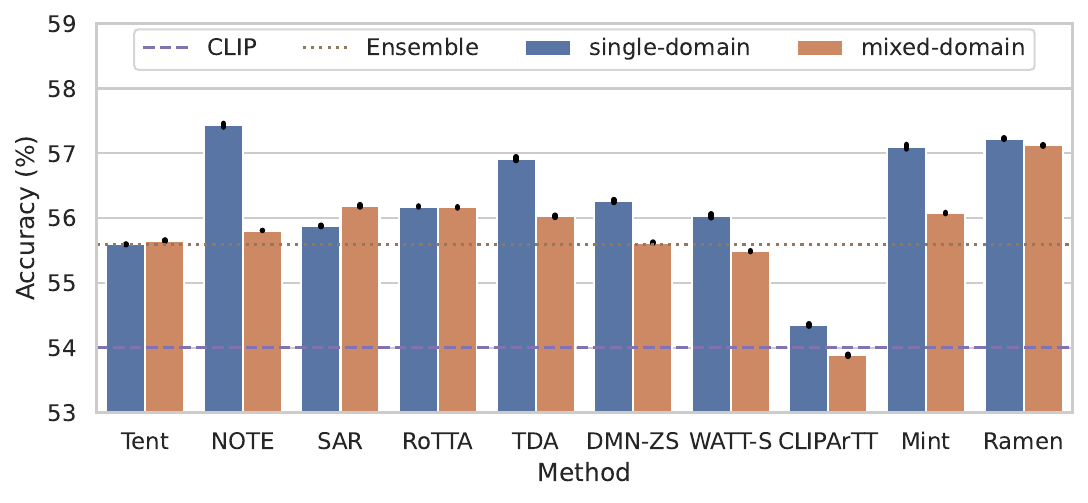}
        \subcaption{DomainNet}
    \end{subfigure}
    \caption{Performance comparison of TTA methods under single-domain and mixed-domain shifts.}
    \label{fig:single_vs_mixed:all}
\end{figure}
\newpage
\subsection{Visualization of Active Sample Selection (RQ2)} 
\label{appendix:exp:vis_mat}

We visualize, for all datasets, the domain distribution of retrieved samples during active sample selection. Each entry $(i, j)$ indicates the average proportion (\%) of support samples from domain $j$ when the test sample comes from domain $i$. Numbers in parentheses denote the proportion of support samples that come from the same domain as the test sample. On average, 44.5\%/40.9\%/38.0\%/47.8\% of the retrieved samples come from the same domain as the test sample on CIFAR-10-C/CIFAR-100-C/ImageNet-C/DomainNet, respectively, which is substantially higher than random selection (6.7\%/6.7\%/6.7\%/16.7\%). Moreover, even when the retrieved samples are from different domains, they typically belong to semantically or visually related domains (e.g., Gaussian noise and shot noise, motion blur and zoom blur, sketch and quickdraw).

\begin{figure}[h!]
    \centering
    \begin{subfigure}[t]{0.49\linewidth}
        \centering
        \includegraphics[width=\linewidth]{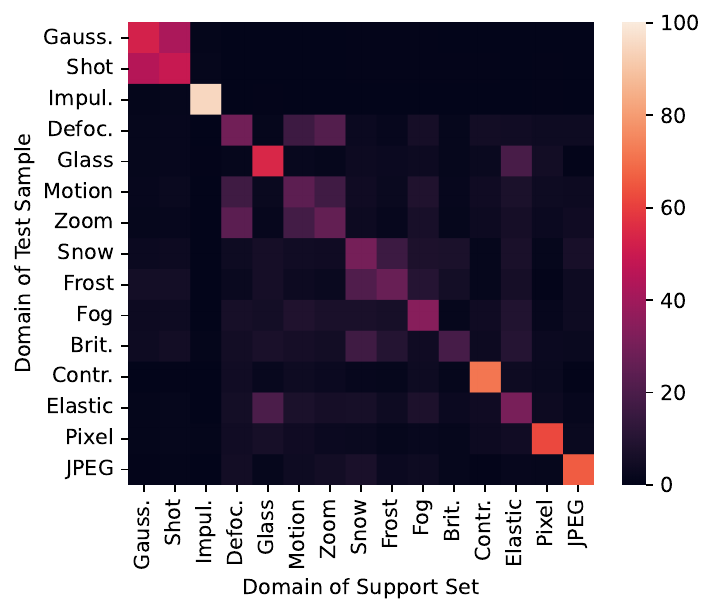}
        \subcaption{CIFAR-10-C (44.5\%)}
    \end{subfigure}
    \hfill
    \begin{subfigure}[t]{0.49\linewidth}
        \centering
        \includegraphics[width=\linewidth]{figure/vis_mat_CIFAR100C.pdf}
        \subcaption{CIFAR-100-C (40.9\%)}
    \end{subfigure}

    \begin{subfigure}[t]{0.49\linewidth}
        \centering
        \includegraphics[width=\linewidth]{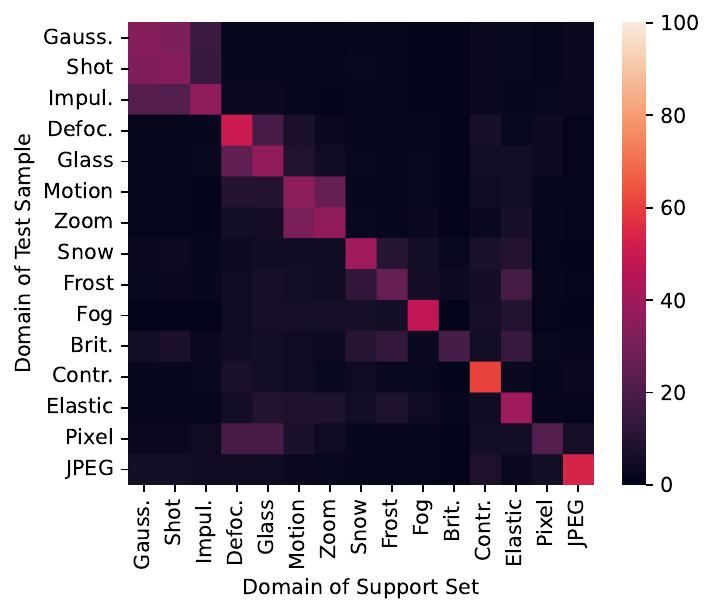}
        \subcaption{ImageNet-C (38.0\%)}
    \end{subfigure}
    \hfill
    \begin{subfigure}[t]{0.49\linewidth}
        \centering
        \includegraphics[width=\linewidth]{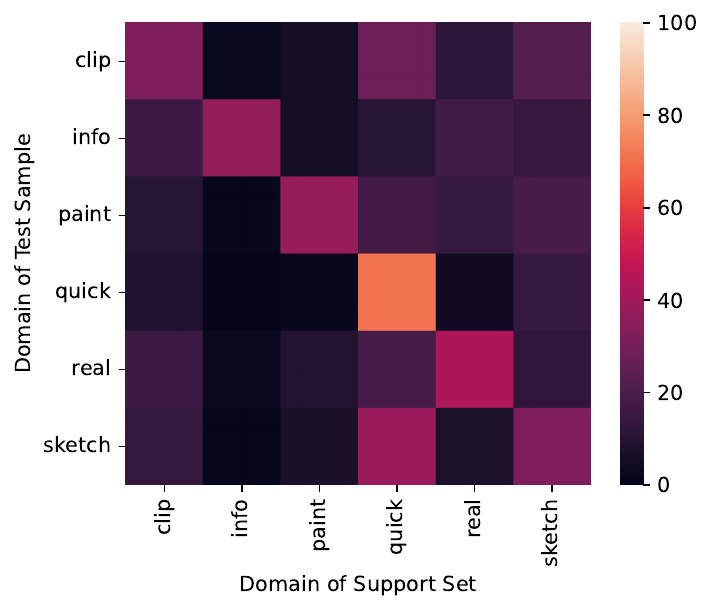}
        \subcaption{DomainNet (47.8\%)}
    \end{subfigure}
    \caption{Visualization of active sample selection.  Each entry $(i, j)$ indicates the average proportion (\%) of support samples from domain $j$ when the test sample comes from domain $i$. Numbers in the parentheses indicate the proportion of support samples that come from the same domain as the test sample.}
    \label{fig:appendix:exp:visual}
\end{figure}
\newpage
\subsection{Memory Usage (RQ3)}
\label{appendix:exp:memory}

In the main text, we report the runtime of {\algname} and baseline TTA methods. Here, we further analyze memory usage as reported in Table~\ref{tab:memory}. Although {\algname} is highly efficient in computation due to the embedding-gradient cache, this efficiency is achieved by trading memory for time: storing both embeddings and gradients inevitably consumes GPU memory. In our CIFAR-100-C experiments, the GPU memory usage of the embedding-gradient cache is
\begin{align*}
    \underbrace{100 \vphantom{(}}_{\text{Number of classes $C$}} 
    \times 
    \underbrace{750 \vphantom{(}}_{\text{Maximum Capacity per class $K$}} 
    \times 
    ( 
    \underbrace{512 \vphantom{(}}_{\text{Embedding dim}} 
    + 
    \underbrace{39{,}936 \vphantom{(}}_{\text{Gradient dim}} 
    )
    \times 
    \underbrace{2\text{ B} \vphantom{(}}_{\text{Half dtype}}
    \approx 5,785 \text{ MiB, }
\end{align*}
which constitutes the main source of GPU memory consumption. In addition, the reparameterization trick introduced in Appendix~\ref{appendix:discussion:reparam} to improve parallelism also incurs about 1,300 MiB additional memory overhead. Overall, {\algname} requires 14{,}526~MiB of GPU memory in total. While this represents an increase compared to the simplest baselines (e.g., EntMin), it remains within a reasonable range given the substantial performance and efficiency gains. 

\paragraph{Ways to save memory}
When GPU memory is highly limited, several trade-offs are available.
Reducing the maximum capacity $K$ (e.g., from 750 to 300, which only slightly affects accuracy according to Figure~\ref{fig:hp:all}) or lowering the batch size (which has negligible impact as shown in Figure~\ref{fig:hp:bs}) both effectively decrease memory usage.
Additionally, we provide an alternative configuration that offloads gradients to the CPU: embeddings are stored on the GPU for the computation of distance and aggregation weights, while gradients are stored and aggregated on the CPU before the aggregated result is transferred back to the GPU.
Although this design introduces extra CPU–GPU communication and increases computation time, the GPU memory used by the cache drops drastically to 
\begin{align*}
    \underbrace{100 \vphantom{(}}_{\text{Number of classes $C$}} 
    \times 
    \underbrace{750 \vphantom{(}}_{\text{Maximum Capacity per class $K$}} 
    \times 
    \underbrace{512 \vphantom{(}}_{\text{Embedding dim}} 
    \times 
    \underbrace{2\text{ B} \vphantom{(}}_{\text{Half dtype}}
    \approx 73 \text{ MiB, }
\end{align*}
which is nearly negligible compared to the model’s computational graph.

\begin{table}[h!]
    \centering
    \caption{Comparison of GPU Memory usage on CIFAR-100-C.}
    \label{tab:memory}
    \footnotesize
    \begin{tabular}{lrr}
        \toprule
        Method & GPU Memory & Testing Time \\
        \midrule
        EntMin                                      & 7,290 MiB     & 11m42s \\
        {\algname} (cache on CPU)                   & 8,642 MiB     & 2h13m03s \\
        {\algname} (embeddings on GPU, gradients on CPU)      & 8,724 MiB     & 46m47s \\
        {\algname} (cache on GPU)                   & 14,526 MiB    & 14m08s \\
        \bottomrule 
    \end{tabular}
\end{table}

\newpage
\subsection{Hyperparameter Sensitivity}
\label{appendix:exp:hparams}

\paragraph{Effect of hyperparameters introduced by {\algname} ($K, k, \beta$)}
{\algname} introduces three key hyperparameters: the maximum capacity per class $K$, the number of retrieved samples per class $k$, and the similarity scaler $\beta$. We evaluate their effects on all datasets, as shown in Figure \ref{fig:hp:all}.
\begin{itemize}
    \item Across all datasets, \textbf{{\algname} consistently outperforms vanilla entropy minimization (EntMin) over a wide range of hyperparameter choices}, indicating that the method is generally robust and effective once the proposed components are incorporated. 
    \item Moreover, the \textbf{influence patterns of these hyperparameters are remarkably consistent} across different datasets.
    \item On ImageNet-C, since the number of samples per class and per domain is relatively small, we set $k=1$. In this case, the similarity scaler $\beta$ becomes irrelevant, as its primary role is to assign finer-grained weights among multiple retrieved samples within each class, which is unnecessary when only one sample is retrieved.
\end{itemize}

\begin{figure}[h!]
    \centering
    \begin{subfigure}[t]{1.0\linewidth}
        \centering
        \includegraphics[width=\linewidth]{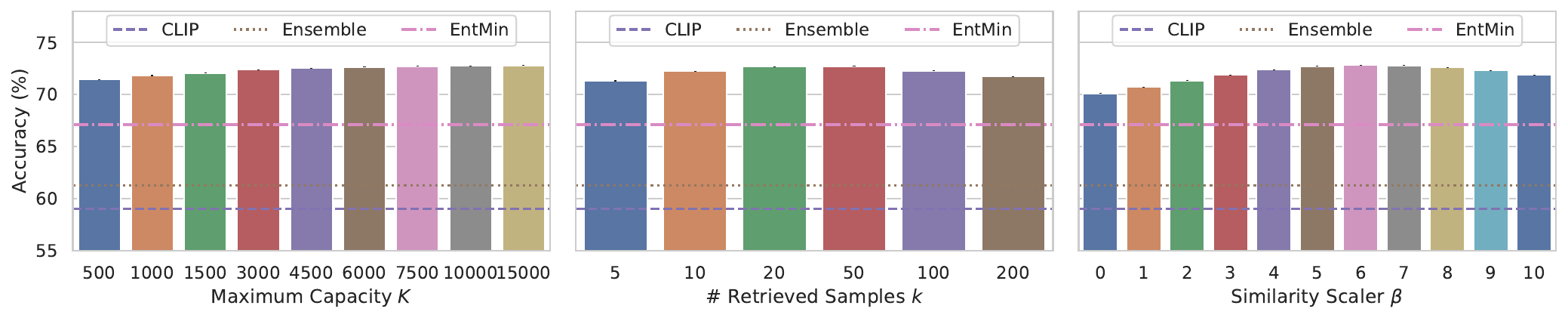}
        \subcaption{CIFAR-10-C}
    \end{subfigure}
    \hfill
    \begin{subfigure}[t]{1.0\linewidth}
        \centering
        \includegraphics[width=\linewidth]{figure/Cifar100_hp.pdf}
        \subcaption{CIFAR-100-C}
    \end{subfigure}
    
    \begin{subfigure}[t]{1.0\linewidth}
        \centering
        \includegraphics[width=\linewidth]{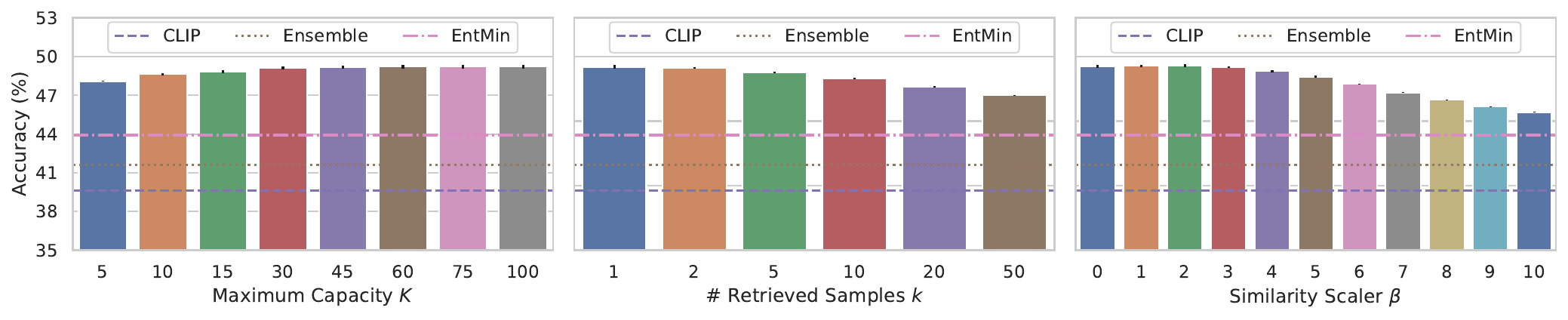}
        \subcaption{ImageNet-C}
    \end{subfigure}
    \hfill
    \begin{subfigure}[t]{1.0\linewidth}
        \centering
        \includegraphics[width=\linewidth]{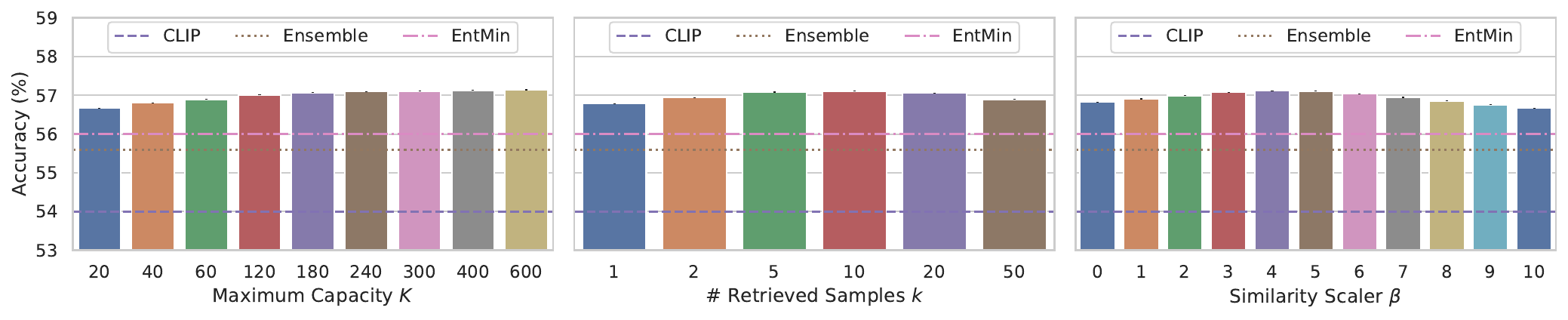}
        \subcaption{DomainNet}
    \end{subfigure}
    \caption{Hyperparameter sensitivity. (EntMin refers to entropy minimization without {\algname}.)}
    \label{fig:hp:all}
\end{figure}


\paragraph{Effect of learning rate $\eta$}
The learning rate is one of the most critical hyperparameters in test-time adaptation.
Figure~\ref{fig:hp:lr} compares the performance of {\algname} and vanilla entropy minimization (EntMin) under different learning rates. 
We observe that:
\begin{itemize}
    \item Within a broad range of learning rates, {\algname} yields noticeable performance gains compared to the non-adaptive ensemble baseline. 
    \item {\algname} consistently outperforms EntMin across all settings. 
    \item The optimal learning rate remains highly consistent across different datasets and ViT model scales, demonstrating the robustness of the proposed method.
\end{itemize}

\begin{figure}[h!]
    \centering
    \begin{subfigure}[t]{0.49\linewidth}
        \centering
        \includegraphics[width=\linewidth]{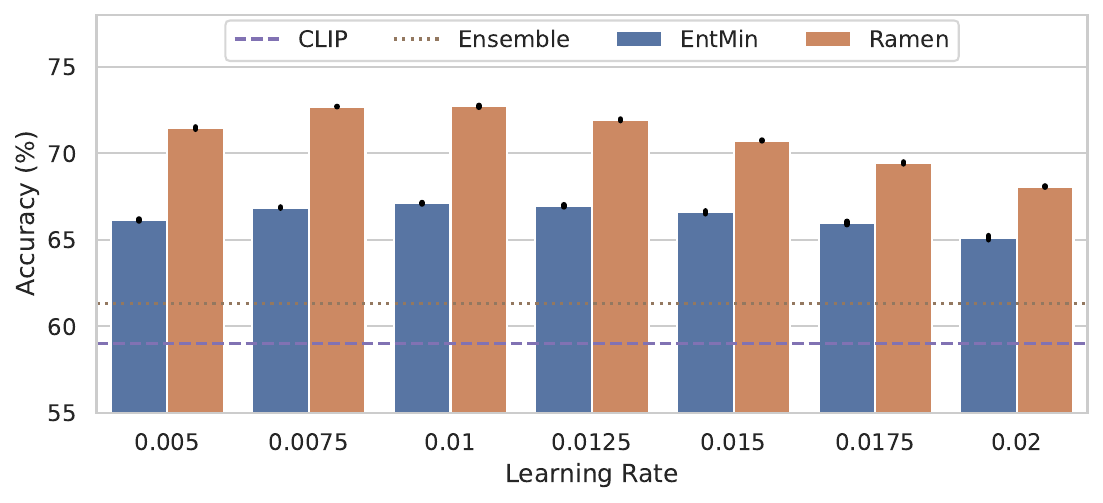}
        \subcaption{CIFAR-10-C}
    \end{subfigure}
    \hfill
    \begin{subfigure}[t]{0.49\linewidth}
        \centering
        \includegraphics[width=\linewidth]{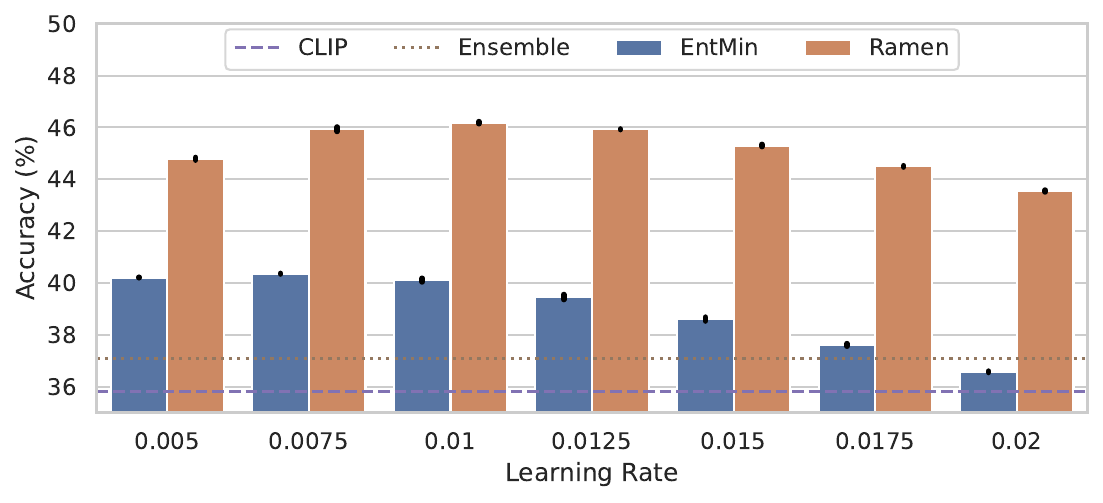}
        \subcaption{CIFAR-100-C}
    \end{subfigure}
    
    \begin{subfigure}[t]{0.49\linewidth}
        \centering
        \includegraphics[width=\linewidth]{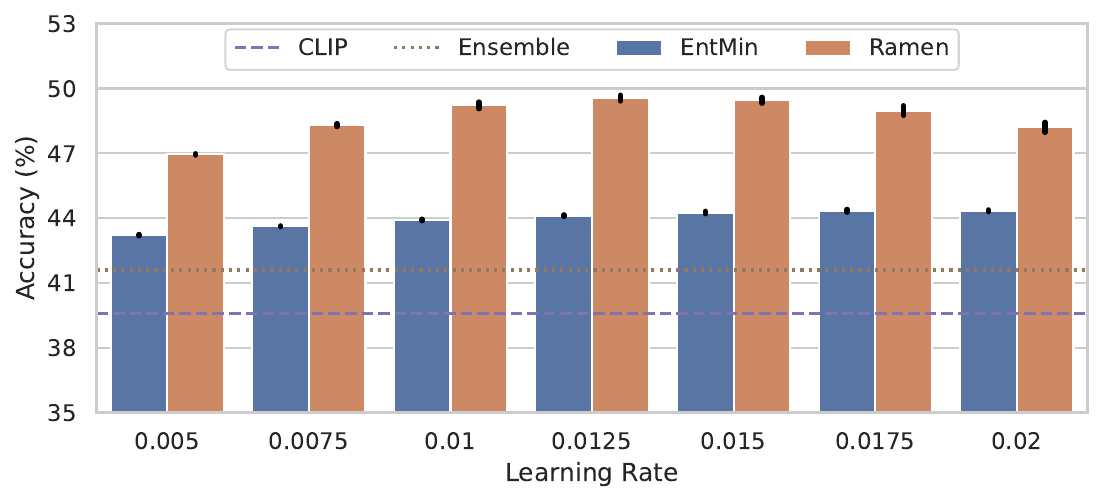}
        \subcaption{ImageNet-C}
    \end{subfigure}
    \hfill
    \begin{subfigure}[t]{0.49\linewidth}
        \centering
        \includegraphics[width=\linewidth]{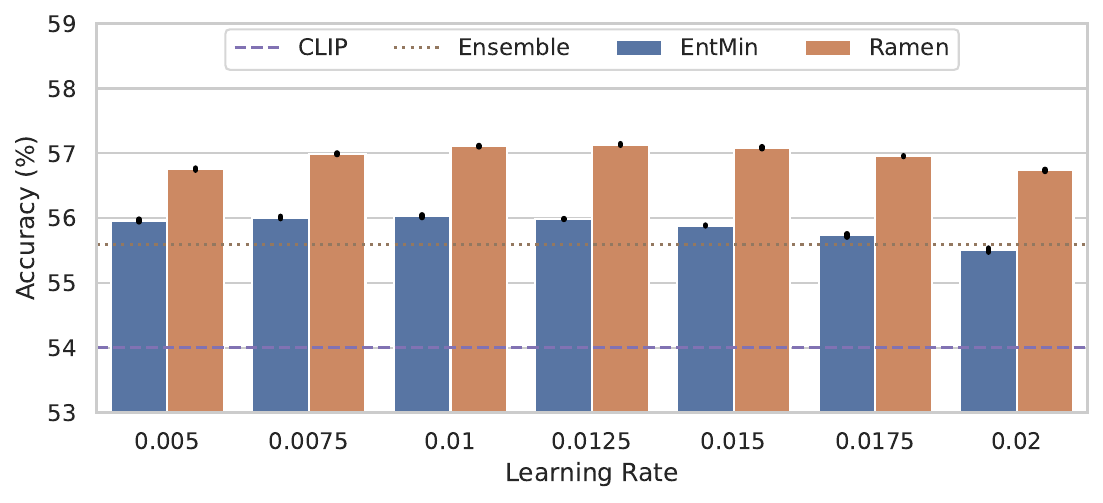}
        \subcaption{DomainNet}
    \end{subfigure}
    \caption{Effect of learning rate. (EntMin refers to entropy minimization without {\algname}.)}
    \label{fig:hp:lr}
\end{figure}


\paragraph{Effect of batch size $B$}
A byproduct of {\algname} is its insensitivity to the test-time batch size.
Since the samples used for adaptation are retrieved from the memory (a total of $C \cdot k$ entries), the update process relies primarily on the retrieved support set rather than the incoming batch itself.
As shown in Figure~\ref{fig:hp:bs}, the performance of {\algname} remains largely stable across different batch sizes.

\begin{figure}[h]
    \centering
    \includegraphics[width=0.5\linewidth]{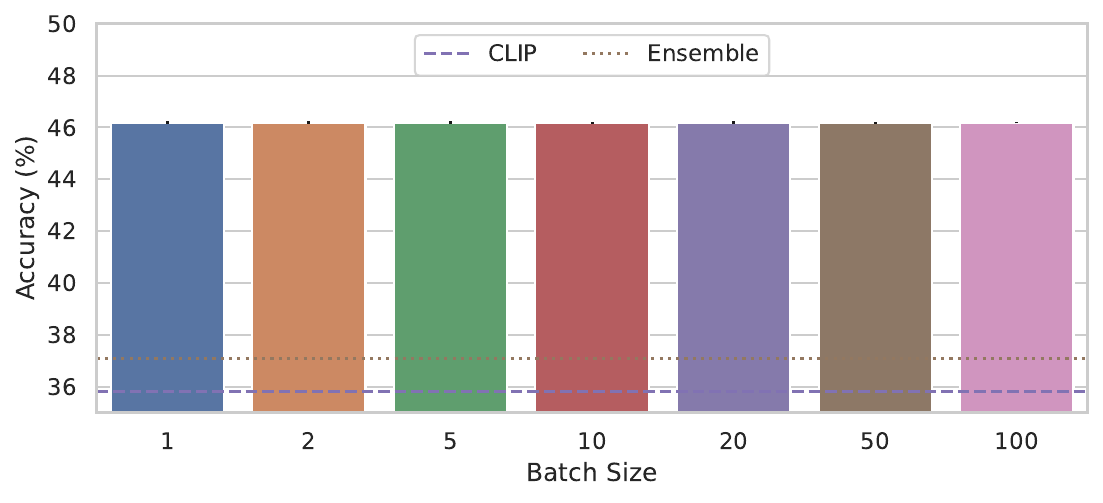}
    \caption{Effect of batch size on {\algname}.}
    \label{fig:hp:bs}
\end{figure}

\newpage
\subsection{Ablation Study}
\label{appendix:exp:ablation}

We perform an ablation study on the two components of the gradient aggregation weights, entropy and similarity, on the CIFAR-100-C dataset.
As shown in Figure~\ref{tab:ablation:weight}, removing either component degrades performance, indicating that both contribute to the effectiveness of {\algname}. 

\begin{table}[h!]
    \centering
    \caption{Ablation study on sample weighting.}
    \label{tab:ablation:weight}
    \resizebox{0.5\linewidth}{!}{
        \begin{tabular}{lccc}
            \toprule
            Method & Entropy Weighting & Similarity Weighting & Accuracy (\%) \\
            \midrule
            CLIP
            & -     & -     & 35.8 \\
            Ensemble
            & -     & -     & 37.1 \\
            EntMin
            & -     & -     & \meansd{40.1}{0.1} \\
            \midrule
            \multirow{4}{*}{{\algname}}
            & \no   & \no   & \meansd{43.9}{0.1} \\
            & \no   & \yes  & \meansd{45.2}{0.0} \\
            & \yes  & \no   & \meansd{45.2}{0.0} \\
            & \yes  & \yes  & \meansd{46.1}{0.1} \\
            \bottomrule 
        \end{tabular}
    }
\end{table}

\subsection{More Architectures}
\label{appendix:exp:architecture}

In the main text, we use ViT models of different size as the visual encoder, since they generally yield stronger performance. Here, we further evaluate {\algname} using a ResNet (RN) \cite{resnet} backbone to verify its generality. Table~\ref{tab:resnet_meansd} summarizes the results on CIFAR-100-C with RN101 as the image encoder. We observe that {\algname} remains effective, demonstrating its compatibility with different model architectures.

\renewcommand{\meansd}[2]{#1 \scriptsize(#2)}

\begin{table*}[h!]
    \centering
    \caption{Accuracy (mean (s.d.) \%) of RN101 on CIFAR-100-C under mixture of 15 corruptions. Best and second-best results are shown in \textbf{bold} and \underline{underlined}, respectively. }
    \label{tab:resnet_meansd}
    \resizebox{1.0\linewidth}{!}{
    \setlength{\tabcolsep}{3.0pt}{
        \begin{tabular}{llccccccccccccccc >{\columncolor{cvprblue!15}}c}
            \toprule
            & & \multicolumn{16}{c}{RN101 on CIFAR-100-C} \\
            \cmidrule(lr){3-18}
            Method & Venue & \multicolumn{3}{c}{Noise} & \multicolumn{4}{c}{Blur} & \multicolumn{4}{c}{Weather} & \multicolumn{4}{c}{Digital} & \multirow{2.5}{*}{Avg.} \cellcolor{white} \\
            \cmidrule(lr){3-5} \cmidrule(lr){6-9} \cmidrule(lr){10-13} \cmidrule(lr){14-17}
            & & Gauss. & Shot & Impul. 
            & Defoc. & Glass & Motion & Zoom  
            & Snow & Frost & Fog & Brit. 
            & Contr. & Elastic & Pixel & JPEG \\
            \midrule
            CLIP \cite{clip} & ICML'21 &
                10.8 & 12.9 & 8.7 & 19.3 & 12.3 & 20.8 & 24.3 & 28.5 & 31.8 & 27.5 & 36.0 & 18.4 & 19.5 & 17.7 & 20.5 & 20.6 \\
            Ensemble & - & 
                11.7 & 14.5 & 9.6 & 20.5 & 11.4 & 21.5 & 25.9 & 29.2 & 31.8 & 26.8 & 37.3 & 18.8 & 18.5 & 19.3 & 21.7 & 21.2 \\
            Tent \cite{tent} & ICLR'21 & 
                \meansd{12.2}{0.1} & \meansd{15.1}{0.1} & \meansd{10.3}{0.1} & \meansd{23.2}{0.2} & \meansd{10.3}{0.1} & \meansd{22.4}{0.1} & \meansd{28.4}{0.1} & \meansd{30.4}{0.2} & \meansd{31.6}{0.1} & \meansd{26.8}{0.1} & \meansd{39.1}{0.2} & \meansd{20.6}{0.1} & \meansd{18.2}{0.1} & \meansd{19.7}{0.2} & \meansd{23.0}{0.2} & \meansd{22.1}{0.0} \\
            NOTE \cite{note} & NeurIPS'22 & 
                \meansd{13.9}{0.5} & \meansd{17.2}{0.8} & \meansd{13.2}{0.5} & \meansd{29.9}{0.6} & \meansd{11.8}{0.7} & \meansd{25.8}{0.6} & \meansd{34.2}{0.7} & \meansd{34.0}{0.4} & \meansd{35.1}{0.7} & \meansd{30.9}{0.8} & \meansd{44.4}{0.7} & \meansd{27.2}{1.0} & \meansd{20.8}{0.7} & \meansd{20.2}{1.0} & \meansd{25.1}{0.1} & \meansd{25.6}{0.4} \\
            SAR \cite{sar} & ICLR'23 & 
                \meansd{13.0}{0.2} & \meansd{16.2}{0.1} & \meansd{11.3}{0.1} & \meansd{26.0}{0.2} & \meansd{10.2}{0.1} & \meansd{24.1}{0.2} & \meansd{31.0}{0.2} & \meansd{31.8}{0.2} & \meansd{32.8}{0.2} & \meansd{28.2}{0.1} & \meansd{41.3}{0.2} & \meansd{22.8}{0.2} & \meansd{18.8}{0.3} & \meansd{19.7}{0.2} & \meansd{23.9}{0.2} & \meansd{23.4}{0.1} \\
            RoTTA \cite{rotta} & CVPR'23 & 
                \meansd{13.3}{0.9} & \meansd{16.3}{0.8} & \meansd{12.7}{0.5} & \meansd{28.5}{0.4} & \meansd{9.6}{0.5} & \meansd{24.4}{0.5} & \meansd{30.5}{0.3} & \meansd{22.4}{0.8} & \meansd{28.2}{0.6} & \meansd{31.5}{0.5} & \meansd{32.5}{0.9} & \meansd{29.9}{0.5} & \meansd{19.5}{0.4} & \meansd{18.4}{0.8} & \meansd{22.4}{0.4} & \meansd{22.7}{0.3} \\
            TDA \cite{tda} & CVPR'24 & 
                \meansd{12.8}{0.1} & \meansd{15.6}{0.1} & \meansd{11.0}{0.2} & \meansd{20.5}{0.3} & \meansd{11.7}{0.1} & \meansd{22.5}{0.2} & \meansd{26.4}{0.2} & \meansd{30.9}{0.3} & \meansd{33.1}{0.1} & \meansd{27.7}{0.2} & \meansd{38.7}{0.3} & \meansd{19.5}{0.1} & \meansd{19.6}{0.1} & \meansd{20.2}{0.1} & \meansd{22.4}{0.2} & \meansd{22.2}{0.1} \\
            DMN-ZS \cite{dmn} & CVPR'24 & 
                \meansd{12.0}{0.0} & \meansd{14.7}{0.1} & \meansd{7.8}{0.1} & \meansd{21.0}{0.1} & \meansd{11.7}{0.1} & \meansd{22.1}{0.0} & \meansd{26.4}{0.1} & \meansd{30.0}{0.1} & \meansd{32.4}{0.1} & \meansd{27.1}{0.0} & \meansd{37.9}{0.1} & \meansd{19.3}{0.1} & \meansd{19.0}{0.0} & \meansd{19.6}{0.1} & \meansd{22.4}{0.1} & \meansd{21.6}{0.0} \\
            WATT-S \cite{watt} & NeurIPS'24 & 
                \meansd{\underline{15.8}}{0.1} & \meansd{\underline{18.9}}{0.2} & \meansd{\underline{15.4}}{0.1} & \meansd{\underline{33.9}}{0.2} & \meansd{\textbf{13.8}}{0.1} & \meansd{\underline{28.2}}{0.1} & \meansd{\underline{37.9}}{0.1} & \meansd{\underline{36.6}}{0.3} & \meansd{\textbf{38.2}}{0.2} & \meansd{\underline{34.6}}{0.2} & \meansd{\underline{47.9}}{0.2} & \meansd{\underline{31.6}}{0.3} & \meansd{\textbf{23.7}}{0.2} & \meansd{20.5}{0.2} & \meansd{\underline{26.6}}{0.1} & \meansd{\underline{28.2}}{0.1} \\
            CLIPArTT \cite{clipartt} & WACV'25 & 
                \meansd{11.0}{0.1} & \meansd{13.4}{0.1} & \meansd{10.4}{0.1} & \meansd{26.7}{0.2} & \meansd{12.4}{0.2} & \meansd{23.1}{0.2} & \meansd{30.5}{0.4} & \meansd{31.3}{0.2} & \meansd{32.6}{0.1} & \meansd{28.6}{0.1} & \meansd{40.8}{0.3} & \meansd{24.2}{0.2} & \meansd{19.1}{0.2} & \meansd{\underline{21.8}}{0.2} & \meansd{23.4}{0.3} & \meansd{23.3}{0.1} \\
            Mint \cite{mint} & NeurIPS'25 & 
                \meansd{15.3}{0.1} & \meansd{18.6}{0.2} & \meansd{13.4}{0.1} & \meansd{31.2}{0.0} & \meansd{11.1}{0.1} & \meansd{26.5}{0.1} & \meansd{35.7}{0.1} & \meansd{34.9}{0.1} & \meansd{35.1}{0.1} & \meansd{31.5}{0.1} & \meansd{44.9}{0.1} & \meansd{27.9}{0.0} & \meansd{21.7}{0.0} & \meansd{18.5}{0.1} & \meansd{24.8}{0.1} & \meansd{26.1}{0.0} \\
            {\algname} & - & 
                \meansd{\textbf{19.0}}{0.2} & \meansd{\textbf{21.9}}{0.2} & \meansd{\textbf{16.3}}{0.2} & \meansd{\textbf{36.3}}{0.4} & \meansd{\underline{13.1}}{0.1} & \meansd{\textbf{30.4}}{0.2} & \meansd{\textbf{40.3}}{0.2} & \meansd{\textbf{36.7}}{0.2} & \meansd{\underline{36.8}}{0.2} & \meansd{\textbf{35.5}}{0.2} & \meansd{\textbf{49.3}}{0.3} & \meansd{\textbf{37.7}}{0.3} & \meansd{\underline{22.3}}{0.4} & \meansd{\textbf{25.9}}{0.4} & \meansd{\textbf{26.9}}{0.2} & \meansd{\textbf{29.9}}{0.1} \\
            \bottomrule 
        \end{tabular}
    }
    }
\end{table*}


\end{document}